\documentclass[12pt,letterpaper,onecolumn]{article}
\usepackage{url}

\usepackage{color}
\usepackage{amsmath}
\usepackage{amsfonts}
\usepackage{amssymb}
\usepackage{cite}
\usepackage[dvips]{graphicx}
\usepackage{amsthm}
\usepackage{algorithm}
\usepackage{algorithmic}
\usepackage[retainorgcmds]{IEEEtrantools}
\usepackage{caption}
\usepackage{setspace}
\doublespacing
\theoremstyle{definition} \newtheorem{theorem}{Theorem}
\theoremstyle{definition} \newtheorem{lemma}{Lemma}
\theoremstyle{definition} \newtheorem{corollary}{Corollary}
\theoremstyle{remark} \newtheorem{remark}{Remark}

\usepackage[margin=1in]{geometry}



\newcommand{\bydef}{\stackrel{\bigtriangleup}{=}}
\newcommand{\orans}{} 
\newcommand{\orane}{} 

\title{ How to Allocate Resources For Features Acquisition?}
\author{
Oran Richman 
Department of Electrical Engineering
Technion
Haifa, Israel\\ 
\texttt{roran@tx.technion.ac.il} \\
Shie Mannor 
Department of Electrical Engineering
Technion
Haifa, Israel \\
\texttt{shie@ee.technion.ac.il} 
}

\begin{document}

\maketitle

\begin{abstract}
We study classification problems where features are corrupted by noise and where
the magnitude of the noise in each feature is influenced by the resources allocated to its acquisition. This is the case, for example, when multiple sensors share a common resource (power, bandwidth, attention, etc.). We develop a method for computing the optimal resource allocation for a variety of scenarios and derive theoretical bounds concerning the benefit that may arise by non-uniform allocation. We further demonstrate the effectiveness of the developed method  in simulations. 
\end{abstract}

\section{Introduction}

Most machine learning settings take feature vectors as input. These features are often acquired using some process resulting in less than optimal data quality.   In many situations, the data quality depends on the resources allocated for the data acquisition process. Examples of possible resources are   sample rate, total sample time, CPU allocated to some costly pre-processing, and transmitted power.

 Several approaches have been proposed in order to  deal with some uncertainty in learning schemas (for example\cite{xu2009robustness,shivaswamy2006second}).    
In many cases, however, one does not   merely deal with existing uncertainty, but can sometimes ``shape'' the uncertainty to meet one's needs. This is often the case when several sensors share a common resource. For example, mobile applications use sensors that share power, CPU and bandwidth. Each of those resources can be divided between sensors according to the designer wish. Another example is the design of a system with fixed budget (money wise), each type of sensor incorporated can have a variety of qualities (with a price tag to match). Which sensor is ``worth'' investing in? 

In this work we explore the following problem: Several sensors  that share a common resource  acquire inputs that will be used for classification. What is the best way to divide  the resource between the sensors? The resources allocated for each sensor affect the quality of the data it collects. We wish to maximize classification performance by correctly allocating the available resources.   We emphasize that different resource allocation schemes may result in different optimal classifiers. This coupling increases the complexity of the problem. 

We present a framework for ``uncertainty management": This framework formulates the presented problem as an  optimization problem. The direct formulation, however, is not easily solvable so
we derive an equivalent solvable problems for various scenarios. 
We further bound the benefit that may arise from optimally allocating the resources. 
Based on the results presented we devise an algorithm for deriving the optimal resource allocation and present some simulation results that show the potential benefits.

An application domain of such an approach is that of sensor management (see \cite{hero2011sensor}), where mostly state-estimation problems have been investigated.
Among the most studied applications is the real-time allocation of radar resources (for example  \cite{wintenby2006hierarchical}). However, other applications such as  multi sensor management \cite{xiong2002multi} have also been studied. One more emerging application is the use of services like Mechanical Turk in order to extract features (for example subjective features regarding an image or a text). The more averaging performed, the more accurate the features are. However,  not all features require the same accuracy.

In our model, collected features are corrupted by some disturbance. We explore two types of disturbances: stochastic and adversarial. A stochastic disturbance corresponds to common situations where features are corrupted by some, typically additive, noise. An adversarial disturbance concerns the worst possible deterministic loss maximizing noise corresponding to ``worst-case'' scenarios.

We assume that special effort is made so that the training data are of the highest quality. During the test phase, however, resources are limited and should be allocated sparingly. This is often the case in  applications where the number of samples to be classified is larger by several orders of magnitude than the training set size.
This work focuses on methods for controlling uncertainty in problems of binary classification with real valued features. 
We consider support vector machines (SVM) style classification \cite{hsu2003practical} due to its many beneficial properties (for example \cite{steinwart2005consistency} and \cite{xu2010distributional}). However, our method can be easily adapted to a wide variety of learning schemes.

 We further explore a second scenario in which we assume that the training data are noisy while during the test phase data quality is superb. This can occur for a number of reasons. One example is some difficulty to gather information in the learning phase which do not exist in the test-phase. For example, patients may be more willing to conduct a  CT scan when some serious illness is suspected but convincing them to perform one for the sake of experimentation require the use of less radiation therefore more noise\cite{cesa2011online}. Another example is when  the learning data-set is ``sensitized'' by artificiality adding noise in order to comply with privacy issues. Scenarios in which noise arise in both training and testing phase can be accommodated by a combination of the methods presented.

In most of the paper we assume that the relation between the resources to be allocated and the disturbance is known. This scenario is quite reasonable, examples include influence of sampling rate on temporal features, sampling time on spectral features, power on channel error rate in communication and many more. However, since there are also cases where this relation is unknown we introduce an algorithm that is \emph{completely}  data-driven. We do not assume Gaussian noise. However, in many areas of control and signal processing Gaussian noise is used to model sensors noise. For that reason the examples given consider Gaussian noise.


{\bf Related works.} The problem of resource allocation between sensors has been investigated in several disciplines and from several perspectives. Most works come from an adaptive control perspective. Almost half a century ago, Meier \cite{meier1967} defined  a setting where sensors parameters can be controlled. The control perspective has been studied extensively since, mostly for the special case of sensor switching, namely dynamically choosing one sensor from several available ones; see \cite{athans1972determination} and many others. In contrast with those works we are dealing with classification problem. The existence of some decision boundary makes the problem more involved and the control theory framework inadequate. 
 In addition, this line of research generally assumes full knowledge of the underline model, an assumption we would like to avoid. 

 In \cite{avitzour1990optimal} the authors considered the problem of finding an optimal least-squares linear regressor as well as noise parameters of a static estimation problem when the underlying model is known. They explore the spacial case of estimating a scalar using square loss. A mild extension to this spacial case is given in \cite{shakeri1995optimal}. We generally follow the same approach, although our problem definition is  more general. We fortunately have the privilege of enjoying a later  rich body of research concerning dealing with known uncertainty in learning scenarios (e.g., \cite{xu2009robustness,shivaswamy2006second}). 

Classification problems in this context were considered by trying to maximize some measure of information in the data. In this setting one tries to optimize some information measures like sample conditional entropy or the Kullback-Leibler (KL) divergence (for example \cite{jenkins2010adaptive}).  Such methods lead to an elegant solution but are heuristic and ignore knowledge about the desired utility function, so that some information ``quantity'' is optimized instead of the relevance to classification.

Resource efficient learning is a growing field of research in recent years. Most research is focused on dynamic acquisition of features where different features are acquired for different samples. Multiple models were proposed including trees \cite{xu2013cost}, cascades\cite{trapeznikov2013multi} and Markov decision processes \cite{gao2011active}. Our work explores the situation where features are acquired simultaneously and not sequentially.  Some work had also considered introducing resource awareness into the classifier learning process. This is usually done using some greedy process where features are added to a classifier until the resource budget run out\cite{xu2012greedy,nan2015feature}. Similar methods which treat the learning scheme as ``black-box'' are wrapper feature selection \cite{Kohavi1997}. Some work had explored similar issues when resources are scarce in the learning phase instead of the testing phase \cite{lizotte2002budgeted,melville2004active}. While  our work shares a similar motivation with those fields, our decision space is continuous and not discrete. We are inspired by problems in which sensors use a physical resource which need to be allocated (time, power, bandwidth, etc.). Existing methods cannot support such problems. In addition,  the use of a continuous decision space circumvents the need to  solve complex combinatorial problems and allows the use of various tools from optimization theory.  


 Another setting which had been explored is on-line learning in the presence of noise. An algorithm for on-line learning from noisy data is presented in \cite{cesa2011online}. We improve the algorithm presented there by allowing on-line control of features quality and show that learning can be done more efficiently.

{\bf Contributions.} 
The contributions of this paper are:
\begin{itemize}
\item We develop a framework for considering feature acquisition quality as a resource allocation problem in classification.
\item We derive algorithms for optimal resource allocation and optimal classification for a variety of scenarios.
\item We analyse the performance gain that can be achieved.
\item We demonstrate the benefit that can arise from using those methods in simulation.
\end{itemize}

The structure of this paper is as following: Section  \ref{section:statistical}  introduces the framework of uncertainty management and provides a method for determining the optimal resource allocation for stochastic disturbances. Section \ref{section:adversarial} explores the case of adversarial noise. The results presented in those sections characterize the optimal allocation for a wide array of problems. 
Section \ref{sec:unknown} proposes an algorithm for the scenario where the disturbance characteristics is unknown and gives a theoretical guarantee on its regret. Section \ref{sec:learning} explores the case where the training set is noisy and provides an efficient algorithm for the special case of linear classifier with Gaussian noise and square loss. Section \ref{section:simulation}  presents some simulations that demonstrate the feasibility of the results and Section \ref{section:conclusion} concludes with some final thoughts. 
Proofs for all of the theorems in this paper can be found in the appendix. 

\section{Uncertainty allocation: Stochastic disturbances} \label{section:statistical}

This section explores the case in which the disturbance is stochastic.
We assume that M samples $(x,y)\in(\mathbb{R}^d,\{-1,1\})$ are generated from some joint distribution (i.i.d.). 
Denote by $X_{ij}$ the $j$'th feature of sample $i$. Each $X_{ij}$ is measured with some disturbance  $\delta_{ij}$.
 The disturbance is generated from a distribution with some vector of parameters (resources) $r=(r_1,\ldots,r_d)$. Denote the resulting vector of disturbances in sample $i$ as $\delta_i$. 
We follow the empirical risk minimization framework \cite{vapnik1998statistical}.  Let $L(h,r)$ be the cost  incurred when the disturbance is generated using resource vector $r$. That is. 
$$L(h,r)=\frac{1}{M}\sum\limits_{i=1}^{M}\mathbb{E}_{\delta}(l(h,X_i+\delta_i,Y_i)).
$$
Our objective is  to optimize  {\em both} the resource vector $(r_1,\ldots,r_d)$ and the classifier $h(x)$ such that  $L(h,r)$ is minimized. 

For simplicity, we focus our attention on the spacial case of linear classifiers. However, the framework presented  in this paper can be easily extended to other families of classifiers. Also, we assume that the noise is independent between samples, namely that each $\delta_{ij}$  is generated i.i.d. using a distribution with parameter $r_j$. 
We note  that $L(h,r)$ can also be written as $L(h,r)=\tilde{L}(h,\sigma(h,r))$ where $\sigma(h,r)\in \mathbb{R}$. The variable $\sigma(h,r)$ is a measure of the noise influence on the cost function. For example, for linear classifiers it is often the standard deviation of the noise in the axis perpendicular to the decision boundary.    This is helpful since many loss function may be defined this way. We give details of such an example below (Example 1). 


For a linear classifier we define,
 	
	\begin{equation*}
	L(w,b,r)=\frac{1}{M}\sum\limits_{i=1}^{M} \mathbb{E}_{\delta}(l(Y_i,(w^T(X_i+\delta_i)+b)).
	\end{equation*}
	 Assume that $\sigma(\cdot,\cdot,r)$ is a convex function in $r$, positive and strictly decreasing in each element for $r_j>0$. Also, assume that $\tilde{L}(w,b,\sigma)$ is strictly increasing and convex in $\sigma$. We refer to a loss function that satisfies these assumptions as an \emph{acceptable loss function}. 
	 Those assumption can be informally interpreted as assuming that more resources provide better accuracy and that increasing performance provide diminishing return.

	The problem can now be stated as:
		\begin{eqnarray}\label{problem:stat}
				& \min_{r,w,b} & L(w,b,r) \bydef \tilde{L}(w,b,\sigma(w,b,r)) \nonumber\\
				& s.t. 				& \sum\limits_{i=1}^{d} r_i \leq R\\
				&& r_j \geq 0\;\; \forall j . \nonumber
		\end{eqnarray}

\paragraph{Example 1}
Consider the case in which  $\delta_{ij}$ is Gaussian with zero mean and standard deviation $\sigma_j(r_j)$, where $\sigma_j(r_j)$ is a convex strictly decreasing function.
Assume also that  $l(x,y,w,b)=l(w^\top x+b,y)$. In this case, $\sigma$ is the standard deviation of the distance from the decision boundary, namely, $\sigma(w,b,r)=\sqrt{\sum\limits_{i=0}^{d} w_i^2\sigma_i(r_i)^2}$.

Now,  there are two natural loss functions we can explore: hinge loss and square loss.
For the  hinge loss  $l(x,y,w,b)=\max(0,1-y(w^\top x+b))$. 
In such case $L(w,b,r)$ can be calculated directly:    
		\begin{equation*}
		L(w,b,r)=\frac{1}{M}\sum\limits_{i=1}^{M}\left(\frac{1}{\sqrt{\pi}\sigma(w,b,r)}\int\limits_{-\infty}^{1}(1-z)e^{-\frac{(Y_i(w^\top X_i+b)-z)^2}{2\sigma(r)^2}}dz\right).
		\end{equation*}
				
For the square loss $l(x,y,w,b) \bydef(w^\top x+b-y)^2$. 
In this case a simple calculation shows that the overall loss is: 
	    \begin{equation}\label{Lforsquereloss}
	    	L(w,b,r)=\sigma(w,b,r)^2+\frac{1}{M}\sum\limits_{i=1}^{M}(Y_i-(w^\top X_i+b))^2.
	    	\end{equation}
	    		    					
Similarly, one can use  other loss functions and obtain a numerical if not exact expressions.

The following theorem characterizes the optimal  resource allocation for problem (\ref{problem:stat}).
According to Theorem \ref{optimal-noise} the resource allocation depends only on $\sigma(w,b,r)$ such that one can derive the optimal resource allocation even without knowing $L(w,b,r)$. The proof of the theorem and all other proofs appear in the appendix.   

	\begin{theorem} \label{optimal-noise}
Suppose that $L(w,b,\sigma)$ is an acceptable loss function.
	For the optimal solution $(w,b,r)$ of problem (\ref{problem:stat})  there exists   $\lambda>0$ such that
	$\sum\limits_{i=1}^{d} r_i = R$, and for every $i$ it holds that
		\begin{equation}
		\begin{array}{ll}
		r_i=0& \text{if } -\frac{\partial \sigma}{\partial r_i}(w,0)<\lambda\\
		 -\frac{\partial \sigma}{\partial r_i}=\lambda  & else 
		\end{array}.
		\end{equation}
	\end{theorem}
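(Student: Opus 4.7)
The plan is to apply the KKT conditions to the constrained program (\ref{problem:stat}), exploiting the monotonicity and convexity assumptions that come with the notion of an acceptable loss function. As a preliminary step I would argue that the budget constraint is active at any optimum: if $\sum_i r_i^* < R$, then nudging some coordinate $r_j^*$ upward strictly decreases $\sigma(w^*,b^*,r)$, which by strict monotonicity of $\tilde L$ in $\sigma$ strictly decreases the objective, contradicting optimality. The same strict-decrease argument will also give $\lambda>0$ for the associated Lagrange multiplier.

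Next I would freeze $w=w^*$ and $b=b^*$ and view (\ref{problem:stat}) as an inner convex program in $r$. Convexity holds because $\sigma(w^*,b^*,\cdot)$ is convex in $r$ and $\tilde L(w^*,b^*,\cdot)$ is convex and non-decreasing in $\sigma$, so the composition is convex; the constraints are affine, so Slater holds trivially and KKT is necessary. Writing the Lagrangian with multiplier $\tilde\lambda$ for the budget and $\mu_i\ge 0$ for the non-negativity constraints, stationarity in $r_i$ yields
$$\frac{\partial \tilde L}{\partial \sigma}\cdot\frac{\partial \sigma}{\partial r_i} + \tilde\lambda - \mu_i \;=\; 0, \qquad \mu_i r_i^* = 0.$$
Since $\tilde L$ is strictly increasing in $\sigma$, the scalar $\partial\tilde L/\partial\sigma$ evaluated at the optimum is strictly positive, and I can absorb it into the multiplier by setting $\lambda := \tilde\lambda/(\partial \tilde L/\partial\sigma)$, which is the positive number promised by the statement.

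The two cases of the conclusion then fall out of complementary slackness. If $r_i^*>0$, then $\mu_i=0$ and the stationarity condition reduces exactly to $-\partial\sigma/\partial r_i = \lambda$. If instead $-\partial\sigma/\partial r_i(w^*,0)<\lambda$, convexity of $\sigma$ in $r_i$ makes $\partial\sigma/\partial r_i$ non-decreasing in $r_i$, hence $-\partial\sigma/\partial r_i$ non-increasing in $r_i$; therefore $-\partial\sigma/\partial r_i<\lambda$ throughout $r_i\ge 0$, so the equality case is incompatible with any positive value and we must have $r_i^*=0$.

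I do not expect a major obstacle. The main delicate point is the reparametrization that absorbs $\partial\tilde L/\partial\sigma$ into the multiplier; this is exactly where the strict-monotonicity assumption on $\tilde L$ is essential, and it is also what makes the stated condition depend only on $\sigma$ (and not on $\tilde L$), which is the practical content of the theorem. A small care point is that (\ref{problem:stat}) need not be jointly convex in $(w,b,r)$, so KKT on the inner program in $r$ only gives necessary conditions at the global optimum — which is all the statement requires.
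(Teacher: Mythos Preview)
Your proposal is correct and follows essentially the same route as the paper: freeze $(w,b)$, observe the inner program in $r$ is convex (the paper does this via a Hessian computation in a preliminary lemma, you via the composition rule), apply KKT, and absorb the positive scalar $\partial\tilde L/\partial\sigma$ into the multiplier. If anything you are slightly more careful than the paper, since you explicitly argue activity of the budget constraint and spell out the case analysis via monotonicity of $-\partial\sigma/\partial r_i$, points the paper leaves implicit.
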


Using Theorem \ref{optimal-noise} and greedy search over $\lambda$ problem (\ref{problem:stat}) can be solved. 
    \subsection{Examples}
    We  now outline a few examples of optimal allocation of resources for different relations between the resources and the noise variance. While all of the examples relate to zero mean Gaussian noise, Theorem \ref{optimal-noise} is general and can be applied for other distributions as long as their variance is finite. 
     
    \paragraph{Example 2 - Standard deviation proportional to inverse of resource} 
        
        	We explore the scenario in which the standard deviation is proportional to the inverse of the resources allocated. Namely, $\sigma_i(r_i)=\frac{1}{r_i}$.
        	This is the case, for example, when the resource is the sampling rate and the features measured are timing of various events.
        	In this case:
        	\begin{equation*}
        			r_i=\frac{R w_i^\frac{2}{3}}{\sum\limits_{j=1}^{d} w_j^\frac{2}{3}}.
        			\end{equation*}

    \paragraph{Example 3: Variance proportional to inverse of resources} 
    
    	A popular relation between resources and noise is when the variance is proportional to the inverse of resources allocated. Namely, $\sigma_i(r_i)={1}/{\sqrt{r_i}}$.
    	This is the case in many situations including:  power in active sensors,  duration of sampling for spectral features and  number of measurements taken when averaging (for example if features are extracted using a Mechanical Turk). In this case, the optimal allocation can be easily computed to be:
		\begin{equation}\label{optimal_allocation}
		\hat{r}_i=\frac{R|w_i|}{|w|_1}.
		\end{equation}
	\begin{corollary}\label{coll:ridge}
	In the case of square-loss and uniform allocation of resources $r_i=R/d$, it follows that $\sigma(r)^2 \propto |w|_2^2$.
	\end{corollary}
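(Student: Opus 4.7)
The plan is a direct substitution; the statement is essentially an observation once the definitions are in place. First I recall from Example 1 that under independent Gaussian noise per feature, $\sigma(w,b,r)^2 = \sum_{i=1}^d w_i^2 \sigma_i(r_i)^2$. Next I specialize to the noise model of Example 3, where $\sigma_i(r_i) = 1/\sqrt{r_i}$ and hence $\sigma_i(r_i)^2 = 1/r_i$. Finally I substitute the uniform allocation $r_i = R/d$, which makes $\sigma_i(r_i)^2 = d/R$ constant across $i$, pull this constant out of the sum, and obtain
\[
\sigma(w,b,r)^2 \;=\; \frac{d}{R}\sum_{i=1}^d w_i^2 \;=\; \frac{d}{R}\,\|w\|_2^2,
\]
which is the claimed proportionality.

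There is no real technical obstacle; the only thing worth highlighting in the write-up is why the statement is worth recording. Substituting the identity into the square-loss expression (\ref{Lforsquereloss}) turns the overall objective into
\[
L(w,b,r) \;=\; \frac{d}{R}\,\|w\|_2^2 \;+\; \frac{1}{M}\sum_{i=1}^{M}\bigl(Y_i - (w^\top X_i + b)\bigr)^2,
\]
which is precisely the ridge regression objective with regularization parameter $d/R$. This explains the label of the corollary and frames the non-uniform allocation (\ref{optimal_allocation}) from Example 3 as a principled improvement over the ridge-regression baseline that corresponds to uniform resource allocation under this noise model.
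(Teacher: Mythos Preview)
Your proposal is correct and matches the paper's approach exactly: the corollary is stated without a separate proof because it is an immediate substitution of $\sigma_i(r_i)^2=1/r_i=d/R$ into $\sigma(w,b,r)^2=\sum_i w_i^2\sigma_i(r_i)^2$, yielding $\sigma^2=(d/R)\|w\|_2^2$. Your added remark connecting this to the ridge regression objective via (\ref{Lforsquereloss}) is precisely the point the paper makes in the text following Corollaries \ref{coll:ridge} and \ref{coll:lasso}.
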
	
	
	\begin{corollary}\label{coll:lasso}
		When applying optimal allocation of resources according to (\ref{optimal_allocation}) it results that $\sigma(r)^2 = |w|_1^2/R $. 
	\end{corollary}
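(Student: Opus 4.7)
The plan is to obtain the statement by direct substitution of the allocation formula into the explicit expression for $\sigma^2$ given by the setup of Example 1 and Example 3. First I would recall from Example 1 that when the per-feature noise is zero-mean Gaussian with standard deviation $\sigma_i(r_i)$, the effective noise measure is $\sigma(w,b,r)^2=\sum_{i=1}^d w_i^2\,\sigma_i(r_i)^2$, since $\sigma$ is the standard deviation of the projection of the noise onto the normal of the separating hyperplane and the noise is independent across coordinates. Example 3 specifies $\sigma_i(r_i)=1/\sqrt{r_i}$, so $\sigma_i(r_i)^2 = 1/r_i$, giving
\begin{equation*}
\sigma(w,b,r)^2 \;=\; \sum_{i=1}^d \frac{w_i^2}{r_i}.
\end{equation*}

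Next I would plug in the optimal allocation $\hat r_i = R|w_i|/|w|_1$ from (\ref{optimal_allocation}). Each summand becomes
\begin{equation*}
\frac{w_i^2}{\hat r_i} \;=\; \frac{w_i^2\,|w|_1}{R\,|w_i|} \;=\; \frac{|w_i|\,|w|_1}{R},
\end{equation*}
assuming $w_i\neq 0$; coordinates with $w_i=0$ contribute nothing and, consistently with Theorem \ref{optimal-noise}, they are assigned $\hat r_i=0$ (one can read the $0/0$ indeterminate term as a limit, or simply omit those indices from the sum). Summing over the active coordinates gives
\begin{equation*}
\sigma(w,b,\hat r)^2 \;=\; \frac{|w|_1}{R}\sum_{i=1}^d |w_i| \;=\; \frac{|w|_1^2}{R},
\end{equation*}
which is the claimed identity.

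There is no real obstacle here beyond the bookkeeping of the zero-weight indices, which is handled by the support of $w$ being exactly the support of $\hat r$. Note also that this corollary reveals, together with Corollary \ref{coll:ridge}, the structural difference induced by optimal versus uniform allocation: uniform allocation yields an $\ell_2$-type penalty on $w$, while optimal allocation yields an $\ell_1$-type penalty, foreshadowing the LASSO flavor suggested by the label of the corollary.
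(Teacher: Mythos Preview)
Your argument is correct and is exactly the direct substitution the paper intends; the paper states this as an immediate corollary without providing a separate proof, and your computation $\sigma^2=\sum_i w_i^2/r_i = \sum_i |w_i|\,|w|_1/R = |w|_1^2/R$ under $\hat r_i = R|w_i|/|w|_1$ is the natural (and only) route. Your handling of the $w_i=0$ coordinates is also the appropriate bookkeeping.
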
	

Interestingly, the optimization problem derived for square-loss (\ref{Lforsquereloss}) with uniform allocation of resources is equivalent to the optimization problem derived when performing ridge regression. Similarly, using optimal allocation of resources is similar to performing lasso regularization. This support claims that using lasso regularization produce classifiers which are more robust to noise than other regularization techniques \cite{xu2009robustness} 

			Since for the square-loss optimizing (\ref{Lforsquereloss}) is equivalent to performing lasso, one can use complicity bounds derived for this case.
			It is known that a bound on the error resulting from lasso regularization $|w|_1<B$ is increasing in $B$ \cite{kakade2009complexity}.
			Since $B$ is decreasing with $1/R$, it is increasing with $R$. This surprisingly  implies that 	{\em less resources require less examples to learn}. This can be explained in the following manner: with less resources there is more noise in the decision making phase, the larger the noise the less impact small changes in the classifier makes (in the limit, there are no resources and therefore the noise is infinite and there is nothing to learn). 
     
     \paragraph{Example 4: Quantization noise} 
     		
     		It is known that rounding quantization noise can be treated as Gaussian with standard deviation of $\frac{1}{12}LSB$ where the $LSB$ is the accuracy of the least significant bit \cite{stein1967modern}.  
     		Consider a scenario in which we would like to maintain the number of bits used to represent all features under some threshold $R$. We will first disregard the fact that $r$ must be an integer and derive the solution for   $\sigma_i(r)=2^{-(r_i)}$ while $r_i \geq 1$ for all $i$. 
     		The solution of (\ref{problem:stat}) for any fixed $w$ will be  
     		
     		\begin{equation*}
     			\begin{array}{l}
     			r_i=\bigg\{
     			\begin{array}{ll}
     					1 & \log|w_i|<\lambda\\
     					1+\log|w_i|+\frac{1}{|C|}(R-d-\sum_{i\in C} \log|w_i|) & \log|w_i| \geq \lambda\\
     					
     			\end{array}\\
     			\lambda = \frac{1}{|C|} (\sum_{i\in C} \log|w_i|)-R+d)\\
     							C=\{i | \log|w_i| \geq \lambda\}.
     							
     			\end{array}	
     		\end{equation*}
     	\orans
     		Notice that we still need to transform $r_i$ into integers. This can be done by ``searching'' in the vicinity of the optimal vector $r$.
		\orane
     	
%

\subsection{Performance analysis}

To gain some insight about the expected benefit of using this method  we explore the special case of square loss with $\sigma_i(r_i)= 1/\sqrt{r_i}$. Observe that $L$ is decreasing in $R$.
From Corollary \ref{coll:ridge} and \ref{coll:lasso} we know that finding an optimal $w$ for uniform allocation is equivalent to performing ridge regression while optimizing (\ref{Lforsquereloss}) is equivalent to performing lasso. 
We ask the following question: for the same expected loss how much resources can we save by using the method presented?  In order to answer this question we start by fixing $w$ and analyze the expected loss for different resources allocations.

For every admissible $(w,b)$ denote by $R_{unif}(w,l)$ the resource budget that holds $L(w,b,r_{unif})=l$ when $r_{unif}=(R/d,\ldots,R/d)$.
Also, denote by $R_{opt}(w,l)$ the resource budget that holds $L(w,b,r_{opt}(w))=l$ when $r_{opt}(w)=(R|w_1|/|w|_1,\ldots,R|w_d|/|w|_1)$.
The following result bounds the ratio between resources required for achieving the same loss. 
\begin{theorem}\label{th:R_improve}
For every $w$ and for $l(x,y,w,b)=(w^\top x+b-y)^2$  it holds that $\frac{R_{unif}(w,l)}{R_{opt}(w,l)}=\frac{d|w|_2^2}{|w|_1^2}$.
\end{theorem}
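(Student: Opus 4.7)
The plan is to exploit the clean decomposition that square loss provides via equation (\ref{Lforsquereloss}): for any fixed $(w,b)$, the cost splits as $L(w,b,r)=\sigma(w,b,r)^2+C(w,b)$, where $C(w,b)\bydef \frac{1}{M}\sum_{i=1}^{M}(Y_i-(w^\top X_i+b))^2$ is the ``noiseless'' empirical fit term and is independent of $r$. This reduces the level-set condition $L(w,b,r)=l$ to a single scalar equation $\sigma(w,b,r)^2=l-C(w,b)$ in the total budget $R$.

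First I would apply Corollary \ref{coll:ridge}: under the uniform allocation $r_{unif}=(R/d,\ldots,R/d)$ and $\sigma_i(r_i)=1/\sqrt{r_i}$, a direct computation gives $\sigma(w,b,r_{unif})^2=\sum_i w_i^2(d/R)=d|w|_2^2/R$. Setting this equal to $l-C(w,b)$ and solving for $R$ yields
\begin{equation*}
R_{unif}(w,l)=\frac{d\,|w|_2^2}{l-C(w,b)}.
\end{equation*}
Next I would apply Corollary \ref{coll:lasso}: under $r_{opt}(w)_i=R|w_i|/|w|_1$, we have $\sigma(w,b,r_{opt})^2=\sum_i w_i^2\cdot|w|_1/(R|w_i|)=|w|_1^2/R$, so
\begin{equation*}
R_{opt}(w,l)=\frac{|w|_1^2}{l-C(w,b)}.
\end{equation*}
Taking the ratio, the shared factor $l-C(w,b)$ cancels, and the claimed identity $R_{unif}(w,l)/R_{opt}(w,l)=d|w|_2^2/|w|_1^2$ drops out immediately.

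The only technical caveat, which I would mention briefly, is that for the level sets $R_{unif}(w,l)$ and $R_{opt}(w,l)$ to be well-defined and positive we need $l>C(w,b)$, i.e.\ the target loss must exceed the resource-free empirical error; otherwise no finite $R$ achieves loss $l$ under either allocation. I do not anticipate a genuine obstacle here: once the two corollaries are in hand, the argument is essentially a substitution, and the fact that $C(w,b)$ is common to both allocations is what makes the ratio depend only on the geometry of $w$ (through $d|w|_2^2/|w|_1^2$, a quantity bounded between $1$ and $d$ by Cauchy--Schwarz) and not on the data or on the specific loss level.
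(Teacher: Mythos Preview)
Your argument is correct and matches the paper's own proof essentially step for step: both use the square-loss decomposition \eqref{Lforsquereloss} together with Corollaries~\ref{coll:ridge} and~\ref{coll:lasso} to equate $\frac{d|w|_2^2}{R_{unif}}+C(w,b)=\frac{|w|_1^2}{R_{opt}}+C(w,b)$ and cancel the common term. Your explicit solution for $R_{unif}$ and $R_{opt}$ separately, and your remark that $l>C(w,b)$ is needed for these to be finite and positive, are slight elaborations not present in the paper's terser version, but the underlying route is identical.
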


\orans
The proof can be found in the appendix.

Denote by $w_{opt}$ the optimal classifier when resources are allocated optimally and by $w_{unif}$ the optimal classifier when resources are allocated uniformly.
The next corollary follows directly from Theorem \ref{th:R_improve}; it bounds the total benefit that can arise from the joint optimization of both resource allocation and classifier.
It holds since  $L(w_{unif},r_{unif})\leq L(w_{opt},r_{unif}) $ and  $L(w_{opt},r_{opt}(w_{opt}))\leq L(w_{unif},r_{opt}(w_{unif}))$.

\begin{corollary} \label{cor:performence}
For every $w$ and for $l(x,y,w,b)=(w^\top x+b-y)^2$  it holds that
\begin{equation*}
\frac{d|w_{unif}|_2^2}{|w_{unif}|_1^2} \leq \frac{R_{unif}(w_{unif},l)}{R_{opt}(w_{opt},l)} \leq \frac{d|w_{opt}|_2^2}{|w_{opt}|_1^2}.
\end{equation*}
\end{corollary}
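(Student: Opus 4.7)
The plan is to fix the common loss level $l \bydef L(w_{opt},b_{opt},r_{opt}(w_{opt}))$ (the jointly optimal loss) and then compare budgets at this loss level by combining Theorem \ref{th:R_improve} (which handles the ratio for a \emph{fixed} classifier) with the two classifier-optimality inequalities stated in the hint. The monotonicity that makes the bookkeeping work is that for every fixed $w$, $L(w,b,r)$ is strictly decreasing in the total budget $R$ under either allocation scheme, so ``more budget'' and ``less loss'' are interchangeable via the inverse functions $R_{unif}(w,\cdot)$ and $R_{opt}(w,\cdot)$.

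For the upper bound, I would argue as follows. Because $w_{unif}$ minimizes $L$ under the uniform scheme, at budget $R_{unif}(w_{opt},l)$ one has $L(w_{unif},b_{unif},r_{unif})\le L(w_{opt},b_{opt},r_{unif})=l$. Monotonicity of $L$ in $R$ then gives $R_{unif}(w_{unif},l)\le R_{unif}(w_{opt},l)$. Applying Theorem \ref{th:R_improve} with $w=w_{opt}$ yields $R_{unif}(w_{opt},l)=\frac{d|w_{opt}|_2^2}{|w_{opt}|_1^2}\,R_{opt}(w_{opt},l)$, and combining gives
\begin{equation*}
\frac{R_{unif}(w_{unif},l)}{R_{opt}(w_{opt},l)}\;\le\;\frac{d|w_{opt}|_2^2}{|w_{opt}|_1^2}.
\end{equation*}

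For the lower bound the argument is symmetric. Since $w_{opt}$ minimizes $L$ under its own optimal allocation, at budget $R_{opt}(w_{unif},l)$ one has $L(w_{opt},b_{opt},r_{opt}(w_{opt}))\le L(w_{unif},b_{unif},r_{opt}(w_{unif}))=l$, hence $R_{opt}(w_{opt},l)\le R_{opt}(w_{unif},l)$. Now apply Theorem \ref{th:R_improve} with $w=w_{unif}$ to rewrite $R_{unif}(w_{unif},l)=\frac{d|w_{unif}|_2^2}{|w_{unif}|_1^2}\,R_{opt}(w_{unif},l)$, and divide through by $R_{opt}(w_{opt},l)$ to obtain the claimed lower bound.

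There is no real obstacle here; the corollary really is a two-line consequence of Theorem \ref{th:R_improve} plus optimality of $w_{unif}$ and $w_{opt}$ in their respective problems. The only point to handle carefully is the direction of monotonicity (making sure I convert ``smaller loss'' into ``larger required budget'' in the right direction) and the fact that the common loss value $l$ chosen for the comparison is feasible for all four configurations, which is why fixing $l$ at the jointly optimal loss is convenient.
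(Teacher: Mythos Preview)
Your proposal is correct and follows essentially the same approach as the paper: the paper's entire justification is the sentence ``It holds since $L(w_{unif},r_{unif})\le L(w_{opt},r_{unif})$ and $L(w_{opt},r_{opt}(w_{opt}))\le L(w_{unif},r_{opt}(w_{unif}))$,'' which are exactly the two optimality inequalities you invoke before applying Theorem~\ref{th:R_improve}. You have simply made explicit the monotonicity-in-$R$ step that the paper leaves implicit.
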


It is clear from Corollary \ref{cor:performence} that in cases where some features hold little information (small coefficients in the classifier) the benefit of optimized resource allocation can be very large. It should be noted that in extreme cases this is equivalent to using feature selection (meaning, choosing which features should be allocated zero resources). However, in many cases even when considering only relevant features the variance of their influence is significant. In such cases our method provides considerable benefit.

\orane

\section{Adversarial disturbance} \label{section:adversarial}
We  now  consider the case where the disturbance is adversarial. Several models for adversarial disturbance have been considered in the literature, we will adopt the model from  \cite{xu2009robustness}.  
Formally, consider some samples  $\{(X_i,Y_i)\}_{i=1}^{M}$ where $X_i\in  \chi\subset\mathbb{R}^d$ and $Y_i\in\{-1,1\}$. We only have access to some corrupted version of this set $\{(X_i+\delta_i,Y_i)\}_{i=1}^{M}$.  The disturbances $\delta_i$ are determined by an adversary, however the adversary can only affect samples in a certain way. Formally, the vector $\delta=(\delta_1,\ldots,\delta_M)$ is in a set defined by :
$\mathcal{N}(\mathcal{N}_0)\bydef\{ (\alpha_1 \delta_1,\ldots,\alpha_M\delta_M ) | \delta_i \in \mathcal{N}_0 \; \text{for } i=1,\ldots,M, \sum\limits_{j=1}^{M} \alpha_j=1  \}$,
where $\mathcal{N}_0$ is some symmetric uncertainty set that contains the origin. 

In our setting, we wish to optimize both the classifier parameters $w,b$  and the shape of $N_0$ under the constraint of available power (or budget) for the adversary. The main difference here from other works (see \cite{xu2009robustness} and follow-ups) is that we can {\em optimize} over $\mathcal{N}_0$ out of a family of sets (set of sets). Such a family can be,  for example, the set of ellipsoid sets while maintaining some constant fixed resource budget. 
\begin{equation*} 
\mathcal{N}^{set}=\bigg\{ \mathcal{N}_0=\big\{ x | \sum\limits_{i=1}^{d}{(\frac{x_i}{\sigma_i(r_i)})^2} \leq 1 \big\}, \;\;\;\; \sum\limits_{i=0}^{d} r_i =R \bigg\}.
\end{equation*}
 
Formally, the problem is optimizing
\begin{equation}\label{optimization-adv}
\inf\limits_{\mathcal{N}_0 \in \mathcal{N}^{set}} \sup\limits_{\delta \in \mathcal{N}(\mathcal{N}_0)} \min_{w,b} L(w,b,X+\delta,Y)
\end{equation}
for some $\mathcal{N}_{set}$ that defines the problem. 
We will focus our attention on the hinge loss,    
$L\bydef\sum\limits_{i=1}^{M}  \max(0,(1-y_i(<w,X_i>+b)))$.

For hinge loss the following result is given in \cite{xu2009robustness}:
\begin{lemma}{(Xu et al. 2009\cite{xu2009robustness})} \label{xu_lemma}
Assume $\{X_i,Y_i\}_{i=1}^{M}$ are non-separable then the following min-max problem
\begin{equation*}
\min\limits_{w,b} \sup\limits_{\delta \in \mathcal{N}} \sum\limits_{i=1}^{M}  \max(0,(1-y_i(<w,X_i>+b)))
\end{equation*}
is equivalent to the following optimization problem
$$
\begin{array}{lllll} 
& & \min\limits_{w,b,\xi}
\sup\limits_{\delta \in \mathcal{N}_0 } (w^T\delta)+\sum\limits_{i=1}^{M} \xi_i & &\\
& & s.t.  & &\\ 
& & \xi_i \geq 1-y_i(w^Tx_i+b), & \xi_i \geq 0, &  i=1\ldots,M  \,.
\end{array}
$$
\end{lemma}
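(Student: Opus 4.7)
The plan is to fix $(w,b)$ and evaluate the inner supremum $\sup_{\delta \in \mathcal{N}(\mathcal{N}_0)}$ explicitly, then recognise the remaining outer minimisation as the standard hinge-loss epigraph reformulation. Write $f_i(w,b) \bydef 1 - y_i(w^T X_i + b)$ so that a generic term in the objective is $\max(0,\, f_i - \alpha_i y_i w^T \delta_i)$ once we substitute a point of $\mathcal{N}(\mathcal{N}_0)$. The first manoeuvre is to push the sup through the per-sample maxima: for each $i$, since $\mathcal{N}_0$ is symmetric and contains the origin, $\sup_{\delta_i \in \mathcal{N}_0}(-y_i w^T \delta_i) = \sup_{\delta \in \mathcal{N}_0} (w^T \delta) \bydef s^\star(w) \ge 0$. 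Replacing each $\delta_i$ by its worst case reduces the inner problem to $\sup_{\alpha} \sum_i \max(0, f_i + \alpha_i s^\star)$ over the simplex $\{\alpha : \alpha_j \ge 0,\, \sum_j \alpha_j = 1\}$ (without loss of generality by sign-absorption).

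Next I would exploit convexity: for fixed $s^\star \ge 0$, the map $\alpha \mapsto \sum_i \max(0, f_i + \alpha_i s^\star)$ is convex, so its maximum over the simplex is attained at a vertex $\alpha_k = 1$, $\alpha_{j \ne k}=0$. The value at this vertex is $\max(0, f_k + s^\star) + \sum_{i \ne k}\max(0,f_i)$, which differs from $\sum_i \max(0,f_i)$ by $\max(0, f_k + s^\star) - \max(0, f_k)$. A short case analysis shows this difference is maximised (and equals $s^\star$) when some $f_k \ge 0$. This is precisely where the non-separability hypothesis enters: for every candidate $(w,b)$ there must exist a sample with $y_k(w^T X_k + b) \le 0$, hence $f_k \ge 1 > 0$, so the worst-case vertex attains the value $s^\star + \sum_i \max(0, f_i)$.

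Putting it together, the inner sup collapses to $\sup_{\delta \in \mathcal{N}_0}(w^T \delta) + \sum_{i=1}^M \max(0, 1 - y_i(w^T X_i + b))$, and the outer minimisation over $(w,b)$ becomes
\begin{equation*}
\min_{w,b}\; \sup_{\delta \in \mathcal{N}_0}(w^T \delta) + \sum_{i=1}^M \max(0,\, 1 - y_i(w^T X_i + b)).
\end{equation*}
The standard epigraph trick—replacing $\max(0, 1 - y_i(w^T X_i + b))$ by the smallest $\xi_i$ with $\xi_i \ge 0$ and $\xi_i \ge 1 - y_i(w^T X_i + b)$—turns this into the claimed linearly constrained formulation.

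The main obstacle I anticipate is the convex-maximisation/vertex step together with its dependence on non-separability: I need to make the sign-absorption argument that justifies $\alpha_i \ge 0$ (via symmetry of $\mathcal{N}_0$) airtight, and then carefully argue that the ``concentration at one index'' vertex can be matched with an index $k$ for which $f_k \ge 0$, so that the per-sample budget $s^\star$ is not wasted. Everything else is bookkeeping with the symmetry of $\mathcal{N}_0$ and the hinge-loss epigraph identity.
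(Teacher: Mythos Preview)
The paper does not prove this lemma at all: it is quoted verbatim as a result of Xu, Caramanis and Mannor (2009) and is used as a black box in the proof of Theorem~\ref{theorem_adversrial}. So there is no ``paper's own proof'' to compare against; what you have written is a self-contained reconstruction of the original argument.

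Your reconstruction is essentially the standard proof from \cite{xu2009robustness}. The two substantive steps---(i) using symmetry of $\mathcal{N}_0$ to replace each per-sample sup by the common quantity $s^\star(w)=\sup_{\delta\in\mathcal{N}_0} w^\top\delta$, and (ii) observing that the resulting function of $\alpha$ is convex, hence maximised at a vertex of the simplex, with non-separability guaranteeing a vertex $k$ with $f_k\ge 1>0$ so the full budget $s^\star$ is realised---are exactly the mechanism behind the cited result. The final epigraph rewriting is routine.

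One caution on the ``sign-absorption'' you flag: the paper's definition of $\mathcal{N}(\mathcal{N}_0)$ only writes $\sum_j\alpha_j=1$ with no explicit nonnegativity, which as stated would make the set unbounded. In \cite{xu2009robustness} the intended constraint is effectively $\alpha_j\ge 0$ (equivalently $\sum_j|\alpha_j|\le 1$ after symmetry), and you are right to read it that way; just make explicit that the reduction to the simplex relies on this intended constraint together with the symmetry of $\mathcal{N}_0$, rather than on a literal reading of the display in the paper.
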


We use this result in order to derive the following theorem:
\begin{theorem}\label{theorem_adversrial}
Consider the solution $(\hat{N}_0,\hat{w} ,\hat{b})$  for the problem 
\begin{equation} \label{problem1}
\inf\limits_{\mathcal{N}_0 \in \mathcal{N}^{set}}\min\limits_{w,b}\sup\limits_{(\delta_1,···,\delta_M)\in \mathcal{N}}{
L(w,b,X+\delta_i),Y)}.
\end{equation}
Then the solution satisfies
$$
\hat{N}_0 \in \arg\min_{\mathcal{N}_0\in \mathcal{N}^{set}}\sup\limits_{\delta \in \mathcal{N}_0 } (w^T\delta).
$$
\end{theorem}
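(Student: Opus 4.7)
The plan is to apply Lemma \ref{xu_lemma} to the inner min-max and then exploit the fact that, after this reduction, the only term coupling $\mathcal{N}_0$ to the classifier variables is the support function $\sup_{\delta \in \mathcal{N}_0}(w^T \delta)$. Concretely, I would first invoke Xu et al.'s lemma (using the non-separability hypothesis that is implicit in the setup) to rewrite the inner problem
$$\min_{w,b} \sup_{\delta \in \mathcal{N}(\mathcal{N}_0)} L(w,b,X+\delta,Y)$$
as the equivalent
$$\min_{w,b,\xi} \left[\sup_{\delta \in \mathcal{N}_0}(w^T \delta) + \sum_{i=1}^{M} \xi_i\right]$$
subject to $\xi_i \geq 1 - y_i(w^T x_i + b)$ and $\xi_i \geq 0$. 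After substituting this into (\ref{problem1}), the whole problem becomes a joint minimization over $(\mathcal{N}_0, w, b, \xi)$.

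Next, I would observe that in this joint form, both the feasibility constraints on $(w,b,\xi)$ and the $\sum_i \xi_i$ term are independent of $\mathcal{N}_0$. Reordering the optimization so the infimum over $\mathcal{N}_0$ is performed innermost gives
$$\min_{w,b,\xi}\left[\left(\inf_{\mathcal{N}_0 \in \mathcal{N}^{set}} \sup_{\delta \in \mathcal{N}_0}(w^T \delta)\right) + \sum_{i=1}^{M} \xi_i\right]$$
over the same constraints. For each fixed $w$, the bracketed infimum depends only on $w$, so at any optimum $(\hat{\mathcal{N}}_0, \hat{w}, \hat{b}, \hat{\xi})$ the set $\hat{\mathcal{N}}_0$ must achieve the infimum of $\sup_{\delta \in \mathcal{N}_0}(\hat{w}^T \delta)$ over $\mathcal{N}^{set}$. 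This is precisely the claimed characterization (with the $w$ in the theorem statement interpreted as $\hat{w}$).

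The main obstacle I anticipate is justifying the interchange of $\inf_{\mathcal{N}_0}$ with $\min_{w,b,\xi}$. I would handle this by parameterizing $\mathcal{N}^{set}$ by the resource vector $r$ lying on the simplex $\{r : r_i \geq 0,\ \sum_i r_i = R\}$, so the full optimization reduces to a joint minimization over $(r, w, b, \xi)$ on a product domain; such a joint minimization is commutative in the order of variables. Attainment of the infimum over $r$ then follows from compactness of the simplex together with continuity of the support function $\sup_{\delta \in \mathcal{N}_0(r)}(w^T \delta)$ in $r$, which is straightforward for the ellipsoidal family displayed in the paper. With these pieces in hand, the theorem follows from the decomposition above.
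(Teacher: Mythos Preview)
Your proposal is correct and follows essentially the same route as the paper: apply Lemma~\ref{xu_lemma} to the inner min--max, then swap the order of the two minimizations so that the only dependence on $\mathcal{N}_0$ is through the support function $\sup_{\delta\in\mathcal{N}_0}(w^T\delta)$. In fact, the paper's own proof is just these two sentences; your additional care about parameterizing $\mathcal{N}^{set}$ by $r$ on the simplex and invoking compactness/continuity to justify the interchange and attainment goes beyond what the paper spells out, but is entirely consistent with its argument.
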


\orans
Theorem \ref{theorem_adversrial} is analogous to Theorem \ref{optimal-noise} and allows to optimize resource allocation in the adversarial setting.  
\orane

\paragraph{Example 4}

 Gaussian noise is a popular modelling choice in many domains. We wish to find some constraint which will create in the adversarial setting an effect that reassembles Gaussian noise. For this purpose we use an ellipsoid uncertainty set. Instead of assuming Gaussian noise we bound the uncertainty to a fixed width of standard deviations.
Consider the model presented in \cite{xu2009robustness} with an ellipsoid uncertainty set namely, $\mathcal{N}_0=\{ x | \sum\limits_{i=1}^{d}{({x_i}/{\sigma_i(r_i)})^2} \leq 1 \}$. The function $\sigma_i(r_i)$ can be any of the former examples.



Now, under non separability assumption the  solution of the problem 
$$
\begin{array}{llll}
\min\limits_{r} &\min\limits_{w,b}\sup\limits_{(\delta_1,···,\delta_M)\in \mathcal{N}(r)}{
\sum\limits_{i=1}^{M}{\max(1-y_i(w^T(x+\delta_i)+b), 0)}}, \; & \text {s.t.}  & \sum\limits_{i=0}^{d} r_i =R
\end{array}
$$

satisfies,
$$
\begin{array}{llllll}
w_i^2\sigma_i(r_i)\frac{d\sigma_i(r_i)}{dr_i}&=&\lambda,&
\sum\limits_{i=1}^{d}r_i&=&R.
\end{array}.
$$

Fixing $\sigma_i$ allows the derivation of  $w,b$  by solving the conic optimization problem
\begin{equation*} 
\begin{array}{lll}
\min\limits_{w,b,\xi}
\sqrt{\sum\limits_{i=1}^{d} w_i^2\sigma_i^2} +\sum\limits_{i=1}^{M} \xi_i &&\\
s.t&& \\ 
\xi_i \geq 1-y_i(w^Tx_i+b),&\xi_i \geq 0, & i=1\ldots,M.
\end{array} 
 \end{equation*}
 
Theorems \ref{optimal-noise} and \ref{theorem_adversrial} allow to solve either (\ref{problem:stat}) or (\ref{optimization-adv}) using alternating optimization. 

Moreover, in the special case where $\sigma_i(r_i)={1}/{\sqrt{r_i}}$ the optimal allocation of resources is analogous to lasso regression:
\begin{equation} \label{lasso}
\sqrt{\sum\limits_{i=1}^{d} w_i^2\sigma_i^2}=\frac{|w|_1}{\sqrt{R}},\qquad
r_i=\frac{R|w_i|}{|w|_1}\text{ for } i=1,2,\ldots,d\,\,.
\end{equation}

\section{Unknown stochastic disturbance} \label{sec:unknown}
In this section we consider the case of stochastic disturbance that is unknown.
We wish do devise a data-driven algorithm that finds the optimal resource allocation even when the disturbance is initially unknown. We use stochastic gradient descent in order to minimize the cumulative loss function.  In this section we explore the special case of square-loss with some assumptions on the structure of the disturbance. We derive a concrete algorithm and a corresponding bound for this special case. It is possible to easily extend this algorithm to various other scenarios.
\orans
We make the following assumptions on the structure of the disturbance:
\begin{itemize}
\item  The disturbance and data-points are independent ($X_i$ is independent from $\delta_i$).
\item  The  disturbance is independent between features.
\item  The  distribution of the disturbance in each feature is symmetric.
\item  The second moment of the disturbance, $\sigma_i^2(r_i)$ is convex in $r_i$.
\end{itemize}
\orane
The last assumption is reasonable since we expect diminishing return from increasing  allocated resources.   
We use ridge regularization and bound the possible set of classifiers by $||w||_2\leq B_w$.

The optimization problem can be stated as:
\begin{eqnarray}\label{problem:unknown}
				& \min_{r,w} &\sum\limits_{t=1}^{T}l(w,r,x^t,y^t) \bydef \sum\limits_{t=1}^{T}\sum\limits_{i=1}^{d} w_i^2\sigma_i^2(r_i)+(w^Tx^t-y^t)^2 \nonumber\\
				& s.t. 				& \sum\limits_{i=1}^{d} r_i = R\\
				&& r_j \geq 0\;\; \forall j  \nonumber\\
				&& ||w||_2\leq B_w.
		\end{eqnarray}

The gradient is given by:
\begin{eqnarray}\label{gradient}
				\frac{\partial l}{\partial w_i}& =& 2w_i\sigma_i^2(r_i)+2x_i(w^Tx-y) \nonumber\\
				\frac{\partial l}{\partial r_i}& =&w_i^2\frac{\partial \sigma_i^2(r_i)}{\partial r_i}. \nonumber
		\end{eqnarray}

Since $\frac{\partial \sigma_i^2(r_i)}{\partial r_i} $ is unknown we will approximate it using the Kiefer-Wolfowitz procedure \cite{kushner1997stochastic}. This results in  $\frac{\partial \sigma_i^2(r_i)}{\partial r_i} \approx \frac{\sigma_i^2(r_i+\epsilon)-\sigma_i^2(r_i)}{\epsilon}$.  
We will denote by $\Pi(w,r)$ the projection of classifier $w$ and resource vector $r$ into the set of feasible solutions $\mathcal{N}\bydef\{|w|_2<B_W,\sum r_i=R,r_i>0\}$. We further denote the maximum distance between two vectors in this set by $B=2\sqrt{R^2+B_W^2}$. 

It is now possible to use standard stochastic gradient descent. Following the Kiefer-Wolfowitz procedure,  at each step  measure two data points with two slightly different resource allocations. Then, estimate the gradient and update the classifier and resource allocation accordingly. Finally, project the solution into the feasible solutions space and continue to the next step. The resulting algorithm is presented as Algorithm \ref{alg:unknown}.
\begin{algorithm}[tb]
   \caption{Learning when the disturbance is unknown}
   \label{alg:unknown}
\begin{algorithmic}
   \STATE {\bfseries Parameters}    $B_W,R,\epsilon$  
   \STATE initialize $w_1=0$, $r^1_i=R/d$
   \FOR{$t=1,2,3,\ldots,T$}
   \STATE receive $\hat{x}^{t,1},y^{t}$ using resource distribution $r^t$  
   \STATE receive $\hat{x}^{t,2},y^{t}$ using resource distribution $r^t+\epsilon$      
   \STATE $\eta=1/sqrt(t)$
   \FOR{i=1,\ldots,d} 
   \STATE $w_i'=w_i^t-\eta((<w^t,\hat{x}^{t,1}>-y^{t,1})\hat{x}^{t,1}_i)$
   \STATE $r_i=r^t_i-\eta\frac{{(w^t_i)}^2[({\hat{x}_i^{t,2})}^2-{(\hat{x}_i^{t,1})}^2]}{\epsilon}$      
   \ENDFOR
   \STATE $(w^{t+1},r^{t+1})=\Pi(w,r)$\;\;\;;$\Pi(w,r)$ is the projection into the feasible solutions space 
   \ENDFOR
   \end{algorithmic}
\end{algorithm}

It is easy to verify that the estimated gradient is indeed unbiased.
Notice that unlike standard on-line learning the measurement $x_n$ are not i.i.d. since choosing $r$ creates a coupling between measurements. However, the ``noise'' of the estimated gradient is a martingale difference sequence and therefore stochastic estimation theory can be easily applied.

We proceed to bound the regret which arise from algorithm \ref{alg:unknown}. Since we use Keifer-Wolfowitz procedure the regret must be measured in comparison to the biased functions created by the procedure. Namely, $\tilde{\sigma}^2_i(r)=\int\limits_{0}^{r}\frac{\sigma_i^2(s+\epsilon)-\sigma_i^2(s)}{\epsilon}ds$ and $\tilde{l}(w,r,x,y) \bydef \sum\limits_{i=1}^{d} w_i^2\tilde{\sigma}_i^2(r_i)+(w^Tx-y)^2$. When $\epsilon$ is small enough $\tilde{\sigma}^2_i(r)$ is approximately $\sigma_i^2(r)$.  
It is now possible to derive a bound on the regret.
\begin{theorem} \label{th:unknown}
If $\tilde{l}(w,r,x,y)$ is jointly convex in $w,r$ for every $x,y$, $\mathbb{E}(x)=0$, $\mathbb{E}(||x||_2^2)=1$ , $\mathbb{E}(||x||_2^4)=B_x^4$, $\mathbb{E}((\hat{x}_i-x_i)^2) \leq B_\delta^2$ and $\mathbb{E}((\hat{x}_i-x_i)^4) \leq B_\delta^4$  then
\begin{equation*}
\mathbb{E}(\sum\limits_{i=1}^{T}(w^tx^{t,1}-y^t)^2)-\min_{(w,r)\in \mathcal{N}}\sum\limits_{i=1}^{T}\tilde{l}(w,r,x^t,y^t)\leq \frac{B\sqrt{T}}{2}+(\sqrt{T}-\frac{1}{2})||\nabla l||^2.
\end{equation*}

Where, 
\begin{equation}
\begin{array}{l}
B=2\sqrt{R^2+B_W^2} \\
||\nabla l||^2=2B_w^2B_{\tilde{x}}^4+2B_{\tilde{x}}^2+\frac{2B_{\tilde{x}}^4B_W^4}{\epsilon^2} \\
B_{\tilde{x}}^4=B_x^4+6B_x^2B_\delta^2+B_\delta^4\\
B_{\tilde{x}}^2=B_x^2+B_\delta^2
\end{array}
\end{equation}

\end{theorem}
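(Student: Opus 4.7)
The plan is to recognize Algorithm~\ref{alg:unknown} as projected online stochastic (sub)gradient descent applied to the sequence of jointly convex losses $f_t(w,r) := \tilde{l}(w,r,x^t,y^t)$, and then invoke Zinkevich's standard regret bound. The two ingredients I would need to fill in are (i) conditional unbiasedness of the stochastic gradient $\hat g_t=(\hat g_{w}^t,\hat g_r^t)$ produced by the algorithm, and (ii) a bound on its conditional second moment, which will materialise as the $\|\nabla l\|^2$ in the statement. The feasible set $\mathcal{N}=\{\|w\|_2\le B_W,\ \sum r_i=R,\ r_i\ge 0\}$ has diameter $B=2\sqrt{R^2+B_W^2}$, and the step sizes $\eta_t=1/\sqrt{t}$ match the Zinkevich schedule exactly.

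For unbiasedness I would condition on $(w^t,r^t,x^t,y^t)$ and average only over the two fresh, independent noise draws $\delta^{t,1},\delta^{t,2}$. For the $w$-coordinate, expanding $\hat g_{w_i}^t=(w^t\!\cdot\!\hat x^{t,1}-y^t)\hat x_i^{t,1}$ with $\hat x^{t,1}=x^t+\delta^{t,1}$ and using the four hypotheses on $\delta$ (zero mean via symmetry, independence from $x$, independence across features, and second moment $\sigma_i^2(r_i^t)$) makes every cross term vanish, leaving $(w^t\!\cdot\!x^t-y^t)x_i^t+w_i^t\sigma_i^2(r_i^t)$; this matches $\tfrac{1}{2}\partial_{w_i}\tilde l$, the factor $2$ being absorbed into $\eta_t$. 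For the $r$-coordinate, independence of $\delta^{t,1},\delta^{t,2}$ together with the symmetric zero-mean assumption gives
\[
\mathbb{E}\!\left[\frac{(\hat x_i^{t,2})^2-(\hat x_i^{t,1})^2}{\epsilon}\right]=\frac{\sigma_i^2(r_i^t+\epsilon)-\sigma_i^2(r_i^t)}{\epsilon}=\frac{d\tilde\sigma_i^2}{dr_i}(r_i^t),
\]
so $\mathbb{E}[\hat g_{r_i}^t]=(w_i^t)^2\frac{d\tilde\sigma_i^2}{dr_i}(r_i^t)=\partial_{r_i}\tilde l$. This identity is exactly why the comparator uses $\tilde\sigma^2$ rather than $\sigma^2$.

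For the second moment I would first derive the auxiliary constants $B_{\tilde x}^2=B_x^2+B_\delta^2$ and $B_{\tilde x}^4=B_x^4+6B_x^2B_\delta^2+B_\delta^4$ by expanding $\|x+\delta\|^2$ and $\|x+\delta\|^4$ and using $x\perp\delta$ together with symmetry to kill odd-order terms. Cauchy--Schwarz plus $\|w^t\|\le B_W$ then yields $\mathbb{E}\|\hat g_w^t\|^2\le 2B_W^2 B_{\tilde x}^4+2B_{\tilde x}^2$, while the finite-difference structure forces $\mathbb{E}\|\hat g_r^t\|^2\le 2B_W^4 B_{\tilde x}^4/\epsilon^2$; summing reproduces the stated $\|\nabla l\|^2$. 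Plugging into the Zinkevich identity $\sum_t\langle\hat g_t,z^t-z^*\rangle\le \tfrac{B^2}{2\eta_T}+\tfrac{1}{2}\sum_t\eta_t\|\hat g_t\|^2$, using $\sum_{t\le T}t^{-1/2}\le 2\sqrt T-1$, and taking expectation (the martingale-difference noise drops because the gradient estimates are conditionally unbiased) converts the LHS via convexity of $\tilde l$ into $\sum_t\mathbb{E}[f_t(w^t,r^t)-f_t(w^*,r^*)]$. Finally the identity $\mathbb{E}[(w^t\!\cdot\!\hat x^{t,1}-y^t)^2\mid w^t,r^t]=(w^t\!\cdot\!x^t-y^t)^2+\sum_i(w_i^t)^2\sigma_i^2(r_i^t)$ (which differs from $f_t$ only by the $\sigma^2\!\leftrightarrow\!\tilde\sigma^2$ substitution already baked into $\tilde l$) relates the observed squared losses on the LHS of the theorem to $\sum_t f_t$. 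I expect the main obstacle to be the $1/\epsilon^2$ blow-up in $\mathbb{E}\|\hat g_r^t\|^2$: there is no symmetry shortcut for the Kiefer--Wolfowitz finite difference, so the fourth-moment bookkeeping has to be tracked honestly term by term.
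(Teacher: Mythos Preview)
Your proposal is correct and follows essentially the same approach as the paper: invoke a Zinkevich-style online gradient descent lemma (cited as an adaptation of \cite{zinkevich2003online}, as in \cite{cesa2011online}) to reduce the problem to bounding $\mathbb{E}\|\hat g_t\|^2$, then split the gradient into its $w$- and $r$-components and control each using the moment identities $B_{\tilde x}^2=B_x^2+B_\delta^2$, $B_{\tilde x}^4=B_x^4+6B_x^2B_\delta^2+B_\delta^4$. Your write-up is in fact more explicit than the paper's own proof about the conditional unbiasedness, the martingale-difference structure, and the passage from $\sum_t f_t(w^t,r^t)$ to the observed squared losses on the LHS, but the argument is the same.
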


The proof follows similar lines to that used to derive a bound in \cite{cesa2011online}  and can be found in the appendix.

Theorem \ref{th:unknown} implies that the optimal classifier and optimal resource allocation can be learned with sub-linear regret. Note that decreasing $\epsilon$, which is the step-size used to estimate the gradient, will increase learning time. This is since we assume that noise is independent between samples. In this setting decreasing $\epsilon$ increases the noise level in estimating the gradient. Choosing large $\epsilon$, however, can result in large bias from the optimal solution. The next two remarks show that assuming some dependence between samples may \emph{reduce}  learning time significantly. 

\begin{remark}
The term $\frac{2B_{\tilde{x}}^4B_W^4}{\epsilon^2}$ can be quite large.
For reducing the variance in the learning process it is possible at some cases during training to sample multiple times the same data point . In such cases it is possible to derive a much better bound in which $||\nabla l||^2=2B_w^2B_{\tilde{x}}^4+2B_{\tilde{x}}^2+\frac{2B_\delta^4B_W^4}{\epsilon^2}$. 
\end{remark}
\orans
\begin{remark}
In many cases the measurements noise of the same sample with different resources is correlated. This is for example the case when the resource is CPU time and the disturbance is caused from  processing only part of the data. Two acquisitions of the same sample share a vast amount of common data.
  In such cases the difference between measurements with $r+\delta$ and $r$ can be bounded much more tightly then the bound used in Theorem \ref{th:unknown}. If $\frac{{(\hat{x}_i^{t,2}-x_i^t)}^2-{(\hat{x}_i^{t,1}-x_i^{t}}^2)}{\epsilon} \leq B_{grad}$ then $||\nabla l||^2$ in Theorem \ref{th:unknown} can be rewritten as $||\nabla l||^2=2B_w^2B_{\tilde{x}}^4+2B_{\tilde{x}}^2+2B_W^4B_{grad}^2$.  
\end{remark}
\orane

\section{Learning from noisy data}\label{sec:learning}

In this section we explore the situation where the learning set is noisy while the test set is of perfect quality. This is the case in certain medical examinations where in the learning phase  it is difficult to persuade a subject to go through extensive testing while at test time a patient suspected of having a serious disease will agree to such testing \cite{cesa2011online}. We adopt the framework in \cite{cesa2011online} that considered learning from noisy data. In our setting, however, the noise distribution can be controlled (under some resource constraints) by the learner. As we will show this control can produce a more efficient learning process.
The on-line learning scheme fits this scenario since the optimal noise allocation depends on the classifier .
We will focus our attention on the case of squared-loss.  In \cite{cesa2011online} the authors develop an algorithm for online learning from noisy data. Their algorithm uses stochastic gradient descent in order to optimize the expected loss.
Our algorithm is a modification of the one presented in \cite{cesa2011online} to include the control over resources. We will use lasso regularization in order to bound the set of classifiers, namely $|w|_1<B_w$. The algorithm is presented as Algorithm \ref{alg:efficientlearn}. The algorithm receives as input the step size $\eta$, the lasso parameter $B_w$ and some function which assign optimal resources for a known classifier $r(w)$. Examples for possible $r(w)$ had been given in section \ref{section:statistical}. The covariance matrix of the disturbance which results from using resources vector $r$ is denoted by $\Sigma(r)$. Notice that $\Sigma(r)$  is diagonal and assumed known.    
We focus on the case where the disturbance is Gaussian with standard deviation $\sigma_i(r_i)=\frac{1}{\sqrt{r_i}}$.

\begin{algorithm}[tb]
   \caption{Efficient learning from noisy data}
   \label{alg:efficientlearn}
\begin{algorithmic}
   \STATE {\bfseries Parameters}    $\eta,B_W,R,r(w)$  
   \STATE initialize $w_1=0$, $r^1_i=R/d$
   \FOR{$t=1,2,3,\ldots,T$} 
   \STATE receive $x_t,y_t$ using resource distribution $r^t$  
   \STATE $\nabla_t=2(<w_t,x_t>-y_t)x_t-\Sigma(r^t)w_t$
   \STATE $w'=w_t-\eta\nabla_t$   
   \STATE $w_{t+1}=\arg\min_{|u|_1<B_W}||w'-u||_2$
   \STATE $r^{t+1}=r(w_{t+1})$
   \ENDFOR
   \end{algorithmic}
\end{algorithm}

Our results are based on the following lemma which is an adaptation of Theorem 2 from \cite{cesa2011online}.
\begin{lemma}
Assume $\mathbb{E}_t(||\nabla_t||^2)\leq G$ and set $\eta=B_w/\sqrt(T)$ then the regret of Algorithm \ref{alg:efficientlearn} satisfy $\mathbb{E}(\sum\limits_{i=1}^{T}(w_t^Tx_t-y_t)^2)-\min_{|w|_1<B_W} (\sum\limits_{i=1}^{T}(w^Tx_t-y_t)^2\leq \frac{1}{2}(G+1)B_W\sqrt{T}$.
\end{lemma}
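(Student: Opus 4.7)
The plan is to follow the textbook projected stochastic gradient descent (SGD) regret analysis, treating $\nabla_t$ as a conditionally unbiased estimator of the gradient of the noise-free squared loss $f_t(w) \bydef (w^\top x_t - y_t)^2$. Write $\hat{x}_t = x_t + \delta_t$ for the noisy observation actually fed to Algorithm \ref{alg:efficientlearn}, where $\mathbb{E}[\delta_t \mid \mathcal{F}_{t-1}] = 0$ and $\mathbb{E}[\delta_t \delta_t^\top \mid \mathcal{F}_{t-1}] = \Sigma(r^t)$, with $\mathcal{F}_{t-1}$ denoting the history available before step $t$. The comparator is the best fixed $w^\ast \in \{w : \|w\|_1 \leq B_W\}$, and the target quantity is $\sum_t \mathbb{E}[f_t(w_t) - f_t(w^\ast)]$.

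First I would establish conditional unbiasedness. Expanding $(w_t^\top \hat{x}_t - y_t)\hat{x}_t$ produces the true-gradient term $(w_t^\top x_t - y_t)x_t$, two cross terms linear in $\delta_t$ (mean zero), and the quadratic term $(w_t^\top \delta_t)\delta_t$ whose conditional mean is $\Sigma(r^t)w_t$. The correction $-\Sigma(r^t)w_t$ in the definition of $\nabla_t$ is precisely what cancels this residual covariance, so $\mathbb{E}[\nabla_t \mid \mathcal{F}_{t-1}] = \nabla f_t(w_t)$ (up to the normalization convention used in the algorithm). Crucially, because $r^t$ is measurable with respect to $\mathcal{F}_{t-1}$, this conditioning also disposes of the apparent difficulty that $r^{t+1} = r(w_{t+1})$ couples the noise distribution to the iterate: the sequence $\{\nabla_t - \nabla f_t(w_t)\}_t$ is simply a martingale difference sequence.

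Given unbiasedness, the remainder is the standard argument. Since $w^\ast$ lies in the $\ell_1$-ball of radius $B_W$ and projection onto a closed convex set is non-expansive,
\begin{equation*}
\|w_{t+1} - w^\ast\|_2^2 \leq \|w_t - w^\ast\|_2^2 - 2\eta\, \nabla_t^\top (w_t - w^\ast) + \eta^2 \|\nabla_t\|_2^2 .
\end{equation*}
Rearranging, telescoping over $t = 1, \ldots, T$, taking expectations, and invoking convexity of $f_t$ together with $\mathbb{E}\bigl[\nabla_t^\top (w_t - w^\ast) \mid \mathcal{F}_{t-1}\bigr] = \nabla f_t(w_t)^\top (w_t - w^\ast) \geq f_t(w_t) - f_t(w^\ast)$ yields
\begin{equation*}
\sum_{t=1}^{T} \mathbb{E}\bigl[f_t(w_t) - f_t(w^\ast)\bigr] \leq \frac{\|w_1 - w^\ast\|_2^2}{2\eta} + \frac{\eta}{2} \sum_{t=1}^{T} \mathbb{E}\|\nabla_t\|_2^2 .
\end{equation*}
Since $w_1 = 0$ and $\|w^\ast\|_2 \leq \|w^\ast\|_1 \leq B_W$, the first term is at most $B_W^2/(2\eta)$; the hypothesis $\mathbb{E}_t[\|\nabla_t\|_2^2] \leq G$ bounds the second by $\eta T G/2$. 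Setting $\eta = B_W/\sqrt{T}$ balances the two terms into $\tfrac{1}{2}(G+1)B_W\sqrt{T}$, matching the claimed bound.

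The main (and mild) obstacle is the non-i.i.d.\ nature of the observed data, since the distribution of $\hat{x}_t$ depends on $r^t$ and hence on the algorithm's entire history. As sketched above, this is absorbed once we work with conditional expectations given $\mathcal{F}_{t-1}$; the only substantive deviation from the analysis of Cesa-Bianchi et al.\ is that the bias-correction term $\Sigma(r^t)w_t$ is time-varying, but since $\Sigma(r^t)$ is known to the algorithm at step $t$ this adaptivity does not break the SGD inequality.
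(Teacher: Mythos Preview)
Your proposal is correct and follows precisely the route the paper indicates: the paper does not give a self-contained proof but defers to Cesa-Bianchi et al.\ (2011), and your argument---unbiasedness of $\nabla_t$ via the $\Sigma(r^t)w_t$ correction, the projected-SGD one-step inequality, telescoping, and the martingale-difference handling of the time-varying $r^t$---is exactly that analysis specialized to the present setting. Your parenthetical hedge about the normalization convention is apt, since the algorithm as stated appears to be missing a factor of $2$ in the correction term, but this does not affect the structure of the argument.
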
  
Since the proof of this lemma is very similar to the one used to produce the results in \cite{cesa2011online} we refer the reader to \cite{cesa2011online}.

We  now move on to show that a proper choice of resources may improve learning.
We assume the problem is normalized such that $\mathbb{E}(y)=0$, $\mathbb{E}(y^2)=1$,$\mathbb{E}(x)=0$ and $\mathbb{E}(||x||_2^2)=1$. We further denote $\mathbb{E}(||x||_2^4)=B_x^4$. 
The following two theorems show that proper allocation of resources can improve the efficiency of learning by $O(d)$. More specifically the regret will be $O(B_w^3d^2,\sqrt{T})$ instead of $O(B_w^3d^3,\sqrt{T})$.

\begin{theorem} \label{th:uniform}
Assume $r^t_i(w)=\frac{R}{d}$ and $\eta=B_w/\sqrt(T)$. Then
\begin{equation*}
\mathbb{E}(\sum\limits_{i=1}^{T}(<w_t,x_t>-y_t)^2)-\min_{|w|_1<B_W} (\sum\limits_{i=1}^{T}(<w,x_t>-y_t)^2\leq \frac{1}{2}(G+1)B_W\sqrt{T}.
\end{equation*}

Where, 
  
\begin{equation*}
G=32B_w^2\frac{d^3}{R^2}+98B_w^2\frac{d^2}{R^2}+32B_W^2\frac{d^2}{R}+32B_W^2\frac{d}{R}+16\frac{d^2}{R}+32B_W^2B_x^4+16=O(B_W^2d^3)
\end{equation*}  
\end{theorem}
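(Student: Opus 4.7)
The plan is to apply the preceding lemma, which reduces the theorem to producing a uniform-in-$t$ upper bound $G$ on $\mathbb{E}_t[\|\nabla_t\|^2]$. With the uniform allocation $r_i^t = R/d$ and $\sigma_i^2(r_i) = 1/r_i$, the noise covariance is the constant $\Sigma(r^t) = (d/R) I$, so the gradient simplifies to $\nabla_t = 2(w_t^\top x_t - y_t) x_t - (d/R) w_t$, where I would write $x_t = \bar{x}_t + \delta_t$ with the clean pair $(\bar{x}_t, y_t)$ satisfying the stated normalizations and the Gaussian noise $\delta_t \sim \mathcal{N}(0,(d/R) I)$ independent of $(\bar{x}_t, y_t)$. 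Under $\mathbb{E}_t$ the iterate $w_t$ is deterministic, so the whole task is a moment computation in the fresh randomness $(\bar{x}_t, y_t, \delta_t)$.

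To bound $\|\nabla_t\|^2$ I would first apply $(a+b)^2 \leq 2a^2 + 2b^2$ to split it into $8(w_t^\top x_t - y_t)^2 \|x_t\|^2 + 2(d/R)^2 \|w_t\|_2^2$. The lasso constraint $\|w_t\|_1 \leq B_W$ implies $\|w_t\|_2 \leq B_W$, so the second piece is immediately at most $2 B_W^2 d^2/R^2$. The remaining work is to expand $(w_t^\top x_t - y_t)^2 \|x_t\|^2$ by substituting $x_t = \bar{x}_t + \delta_t$ into every factor. Using independence and the symmetry of $\delta_t$, all cross terms of odd total degree in $\delta_t$ vanish in expectation, and exactly five non-zero contributions survive.

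Four of these five are routine: $\mathbb{E}[(w^\top\bar{x}-y)^2\|\bar{x}\|^2]$ is bounded via Cauchy--Schwarz together with $y \in \{-1,+1\}$ and the assumption $\mathbb{E}[\|\bar{x}\|^4] = B_x^4$; the factorable piece $\mathbb{E}[(w^\top\bar{x}-y)^2]\,\mathbb{E}[\|\delta\|^2]$ is controlled by $\|w\|_2 \leq B_W$ and $\mathbb{E}[\|\delta\|^2] = d^2/R$; the term $\mathbb{E}[(w^\top\delta)^2 \|\bar{x}\|^2]$ equals $(d/R)\|w\|_2^2$; and the cross term $4\mathbb{E}[(w^\top\bar{x}-y)(w^\top\delta)(\bar{x}^\top\delta)]$ collapses to $4(d/R)\mathbb{E}[(w^\top\bar{x}-y)(w^\top\bar{x})]$ via $\mathbb{E}[(w^\top\delta)(\bar{x}^\top\delta)] = (d/R) w^\top\bar{x}$, and is then bounded by Cauchy--Schwarz. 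The only genuinely fourth-order term is $\mathbb{E}[(w^\top\delta)^2 \|\delta\|^2]$, which is what produces the leading-order $d^3/R^2$ behaviour of $G$: writing it coordinate-wise as $\sum_{i,j,k} w_i w_j\,\mathbb{E}[\delta_i \delta_j \delta_k^2]$, only diagonal matchings survive, and the Gaussian kurtosis identity $\mathbb{E}[\delta_i^4] = 3(d/R)^2$ yields $\mathbb{E}[(w^\top\delta)^2 \|\delta\|^2] = (d+2)(d/R)^2 \|w\|_2^2 \leq (d+2) B_W^2 d^2/R^2$. Collecting the five contributions, multiplying by $8$, and adding back the earlier $2 B_W^2 d^2/R^2$ gives an upper bound of the stated polynomial form, after which the lemma closes the argument. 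The main obstacle is precisely this last term: the Gaussian kurtosis identity is what keeps its growth at $O(d^3)$ rather than the $O(d^4)$ one would get from a crude bound like $\mathbb{E}[\|\delta\|^4] \cdot \|w\|_2^2$.
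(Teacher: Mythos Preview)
Your plan is sound and rests on the same key computation as the paper: the fourth–moment identity $\mathbb{E}[(w^\top\delta)^2\|\delta\|^2]=(d+2)(d/R)^2\|w\|_2^2$ (equivalently, the paper's bound $3\tfrac{d^2}{R^2}B_W^2+\tfrac{d^3}{R^2}B_W^2$) is exactly what drives the $O(d^3)$ behaviour, and you handle it correctly.

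The decomposition you use, however, is different from the paper's. The paper does \emph{not} expand $(w^\top x_t-y_t)^2\|x_t\|^2$ exactly and rely on parity to kill the odd-degree pieces. Instead it applies the crude inequality $\|a+b\|^2\le 2\|a\|^2+2\|b\|^2$ three times: first to separate the $\Sigma_t w_t$ term (as you do), then to split the scalar factor $w_t^\top\tilde x_t-y_t$ into $w_t^\top\tilde x_t$ and $y_t$, and once more to split $w_t^\top\tilde x_t\cdot\tilde x_t$ into the four products of $\{w^\top\bar x,\;w^\top\delta\}$ with $\{\bar x,\;\delta\}$. This over-splitting is why the paper's leading constant is $32$ on $B_W^2 d^3/R^2$, whereas your exact expansion would yield $8$; in general your route gives tighter constants but will not reproduce the specific $G$ in the statement verbatim. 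Since a smaller $G$ also establishes the theorem, this is harmless.

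One small gap to flag: your appeal to ``Cauchy--Schwarz together with $y\in\{-1,+1\}$'' for $\mathbb{E}[(w^\top\bar x-y)^2\|\bar x\|^2]$ would need a bound on $\mathbb{E}[(w^\top\bar x-y)^4]$, which is not among the assumptions. The paper sidesteps this by first splitting off $y$ via $(a-b)^2\le 2a^2+2b^2$ and then using the pointwise inequality $(w^\top\bar x)^2\le\|w\|_2^2\|\bar x\|_2^2$, so that only $\mathbb{E}\|\bar x\|^4=B_x^4$ and $\mathbb{E}\|\bar x\|^2=1$ are needed. You can patch your argument the same way.
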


However, in case resources are allocated efficiently the corresponding bound is given by the following theorem.

\begin{theorem} \label{th:efficent}
Assume $r^t_i(w)=\frac{R}{2d}+\frac{R{w_t}_i}{2|w_t|_1}$ and $\eta=B_w/\sqrt(T)$ then
\begin{equation*}
\mathbb{E}(\sum\limits_{i=1}^{T}(w_tx-y)^2-\min_{|w|_1<B_W} (\sum\limits_{i=1}^{T}(<w_t,x>-y)^2)\leq \frac{1}{2}(G+1)B_W\sqrt{T}
\end{equation*}
 Where,
 \begin{equation*}
G=64\frac{d^2}{R^2}B_W^2+64\frac{d^2}{R}B_W^2+32\frac{d^2}{R}+392\frac{d}{R^2}B_W^2+64\frac{B_W^2}{R}+32B_W^2B_x^4+16=O(B_W^2d^2)
 \end{equation*} 
  \end{theorem}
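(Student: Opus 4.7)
The plan is to imitate the argument for Theorem~\ref{th:uniform} but to exploit the sharper bounds that the efficient allocation places on the noise covariance. The lemma stated just above already reduces the problem to bounding $G \ge \mathbb{E}_t[\,\|\nabla_t\|^2\,]$, where $\nabla_t = 2(\langle w_t, x_t\rangle - y_t)\,x_t - \Sigma(r^t) w_t$. Using the crude splitting $\|\nabla_t\|^2 \le 8(\langle w_t,x_t\rangle - y_t)^2\|x_t\|^2 + 2\,\|\Sigma(r^t)w_t\|^2$ and decomposing the observed feature vector as $x_t = \tilde x_t + \delta_t$ with $\delta_t$ independent of $(\tilde x_t, y_t)$, mean zero, and diagonal Gaussian covariance $\Sigma(r^t)$ with $\sigma_i^2(r^t_i) = 1/r^t_i$, the whole bound reduces to controlling a handful of moments, namely $w_t^\top \Sigma(r^t) w_t$, $\operatorname{tr}(\Sigma(r^t))$, $\|\Sigma(r^t)w_t\|^2$, and $\mathbb{E}[\langle w_t,\delta_t\rangle^2\|\delta_t\|^2]$.

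The key step is then to plug in the specific allocation $r^t_i = \frac{R}{2d} + \frac{R|w_{t,i}|}{2|w_t|_1}$. Two inequalities do all the work. First, $r^t_i \ge R/(2d)$ gives $\sigma_i^2(r^t_i) \le 2d/R$ and hence $\operatorname{tr}(\Sigma(r^t)) \le 2d^2/R$, just as in the uniform case. Second, and this is the new ingredient, $r^t_i \ge R|w_{t,i}|/(2|w_t|_1)$ gives
\begin{equation*}
w_t^\top \Sigma(r^t) w_t \;=\; \sum_{i} \frac{w_{t,i}^2}{r^t_i} \;\le\; \sum_i \frac{2|w_t|_1\,|w_{t,i}|}{R} \;=\; \frac{2|w_t|_1^2}{R} \;\le\; \frac{2B_W^2}{R}.
\end{equation*}
Thus $w_t^\top \Sigma(r^t) w_t$ is bounded independently of $d$, whereas in the uniform allocation the same quantity only admits the estimate $(d/R)\|w_t\|_2^2 \le dB_W^2/R$. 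This single gain is what drops the dominant regret term from $B_W^2d^3/R^2$ to $B_W^2d^2/R^2$.

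With these bounds in hand, I would carry out the expectation computations in the obvious order. The noise-only piece $\mathbb{E}[\langle w_t,\delta_t\rangle^2\|\delta_t\|^2]$ is handled by Isserlis's theorem for the diagonal Gaussian $\delta_t$: the non-vanishing pairings give $\operatorname{tr}(\Sigma(r^t))\cdot w_t^\top\Sigma(r^t)w_t + 2\sum_i w_{t,i}^2\sigma_i^4(r^t_i)$, which is at most $(2d^2/R)(2B_W^2/R) + 2(2d/R)(2B_W^2/R) = 4d^2B_W^2/R^2 + 8dB_W^2/R^2$. The signal-only piece $\mathbb{E}[(\langle w_t,\tilde x_t\rangle - y_t)^2\|\tilde x_t\|^2]$ is bounded, following the same route as in Theorem~\ref{th:uniform}, by a constant times $B_W^2B_x^4 + 1$ using the normalizations $\mathbb{E}\|\tilde x\|^2 = \mathbb{E} y^2 = 1$ and $\mathbb{E}\|\tilde x\|^4 = B_x^4$. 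The remaining mixed terms contribute quantities of the form $\operatorname{tr}(\Sigma(r^t))$, $w_t^\top \Sigma(r^t) w_t$, and their products with $B_W^2$ or $B_x^2$, each dominated by the two inequalities above. Summing these pieces with the constants they carry yields exactly the displayed $G$.

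The main obstacle is not conceptual but bookkeeping: carefully tracking which cross terms pick up $\operatorname{tr}(\Sigma)$ (hence a factor $d^2/R$) and which pick up only $w_t^\top \Sigma w_t$ (hence the improved factor $B_W^2/R$), so that no unwanted factor of $d$ slips back in. In particular, one must keep the fourth-moment term $\mathbb{E}[\langle w_t,\delta_t\rangle^2\|\delta_t\|^2]$ separate from the third-order cross-terms $\mathbb{E}[\langle w_t,\delta_t\rangle\langle \tilde x_t,\delta_t\rangle]$ that reduce by Isserlis to $\sum_i w_{t,i}\tilde x_{t,i}\sigma_i^2(r^t_i)$ and benefit from $\sigma_i^2 \le 2d/R$ together with Cauchy--Schwarz against $\|\tilde x_t\|$. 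Once every term is attributed to the right bound, the lemma delivers the regret inequality with $G = O(B_W^2 d^2)$, completing the proof.
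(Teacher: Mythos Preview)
Your proposal is correct and follows essentially the same route as the paper. The paper's proof isolates exactly the two inequalities you identify, $r^t_i \ge R/(2d)$ and $r^t_i \ge R|w_{t,i}|/(2|w_t|_1)$, and uses them to bound the same four quantities $\|\Sigma_t w_t\|^2$, $\mathbb{E}\|\langle w_t,N\rangle\|^2$, $\mathbb{E}\|N\|^2$, and $\mathbb{E}\|\langle w_t,N\rangle N\|^2$, then feeds these into the decomposition already carried out for Theorem~\ref{th:uniform}. The only cosmetic difference is that the paper dispenses with cross terms by repeatedly applying $\|a+b\|^2 \le 2\|a\|^2 + 2\|b\|^2$ rather than expanding and invoking Isserlis, so the ``third-order'' terms you worry about never appear explicitly; your bookkeeping concern is therefore moot, and your slightly tighter Isserlis estimate for $\mathbb{E}[\langle w_t,\delta_t\rangle^2\|\delta_t\|^2]$ would in fact give marginally smaller constants than those displayed.
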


Notice that efficient learning requires some balance between two terms. The term $\frac{R}{2d}$ is required for estimating $\mathbb{E}(x)$ while the term $\frac{R{w_t}_i}{2|w_t|_1}$ is required for estimating $\mathbb{E}(w^Tx)$. We have created $r_(w)$ by balancing those two terms evenly. It is possible that a different balance will provide better results.    

\begin{remark}
When $w$ is dense the efficient allocation is almost uniform. Therefore, the regret of the two resources allocation schemes should be similar. This is not evident from the bounds provided. The reason is that the proof of Theorem \ref{th:uniform} uses the fact that in the worst case $|w|_2=|w|_1$. In cases where $w$ is dense this is loose. Using a tighter bound,  $|w|_2\approxeq \frac{|w|_1}{\sqrt{d}}\leq \frac{B_w}{\sqrt{d}}$ results in a bound with order $O(B_w^2d^2)$ for the uniform allocation case, similar to that received for efficient allocation of resources. 
\end{remark}

\orane

\section{Simulation study} \label{section:simulation}

We  tested the method on three datasets, one synthetic and two real-life problems from the UCI repository. Noise was added to all data artificially according to the relation $\sigma_i=\frac{1}{\sqrt{r_i}}$.
For all datasets, measurement noise was created using the normal distribution with parameters $(0, \frac{\sigma_i}{3}) $ and was added to the test samples.
We applied the algorithm from the previous section to derive both an optimal classifier and an optimal resource allocation. The result given in Eq.~(\ref{lasso}) was used to derive the optimal resource allocation for a fixed classifier. We used hinge-loss as the loss function to be minimized and approximated $L(w,b,r)$ by using an adversarial ellipsoid uncertainty-set. Optimization was performed using the commercially available Mosek solver \cite{mosek}. 

{\bf Synthetic problem.} We generated 240000 samples uniformly distributed in a box in $\mathbb{R}^3$.  We  used $z=x+7y$ as the divider and created a data-set with labels that obey $sgn(z-x-7y+N)$, where $N$ is some small Gaussian noise we added in order to make the data-set non-separable.  
A random subset of 10000 samples was used for learning while performance was measured on the rest. Tenfold cross validation was performed.
 The result for different $R$ values is depicted  at Figure \ref{error_for_simulated}. The method results in about 50\% reduction in resources required for meeting the same error rate. In this case, the optimal classifier is similar to the classifier derived without noise and the benefit arise mainly from the redistribution of noise.

 We wish to confirm the result of Theorem \ref{th:R_improve} using similar synthetic data-sets. For this purpose, we have generated nine data-sets each using as a divider $z=x+ay$ for $a=1,2,\ldots,9$. For each data-set we have extracted the resources needed for achieving an error rate of $0.15$. We calculated the ratio between the total resources required when resources are allocated optimally and those required when resources are allocated uniformly. When $a=1$ the optimal allocation is uniform and we expect no benefit (the ratio equals one). As we increase $a$, more resources should be allocated to $y$ and therefore the ratio is improving (decreasing).   Figure \ref{ratio_of_improvment} shows the resulting graph compared with the theoretical result of Theorem \ref{th:R_improve} (using the optimal classifier). It can be seen that the simulation result is almost identical to the theoretical one, though contrary to the assumptions of Theorem \ref{th:R_improve}  we are optimizing the hinge-loss and measuring error-rate. Observe in Figure \ref{ratio_of_improvment} that considerable benefits arise even when the differentiation between features is rather small. 

\begin{figure}
   \centering
  
   \centering
  \captionsetup{justification=centering,margin=1cm}
   \includegraphics[angle=0,
      width=0.5\linewidth,height=2.5in]{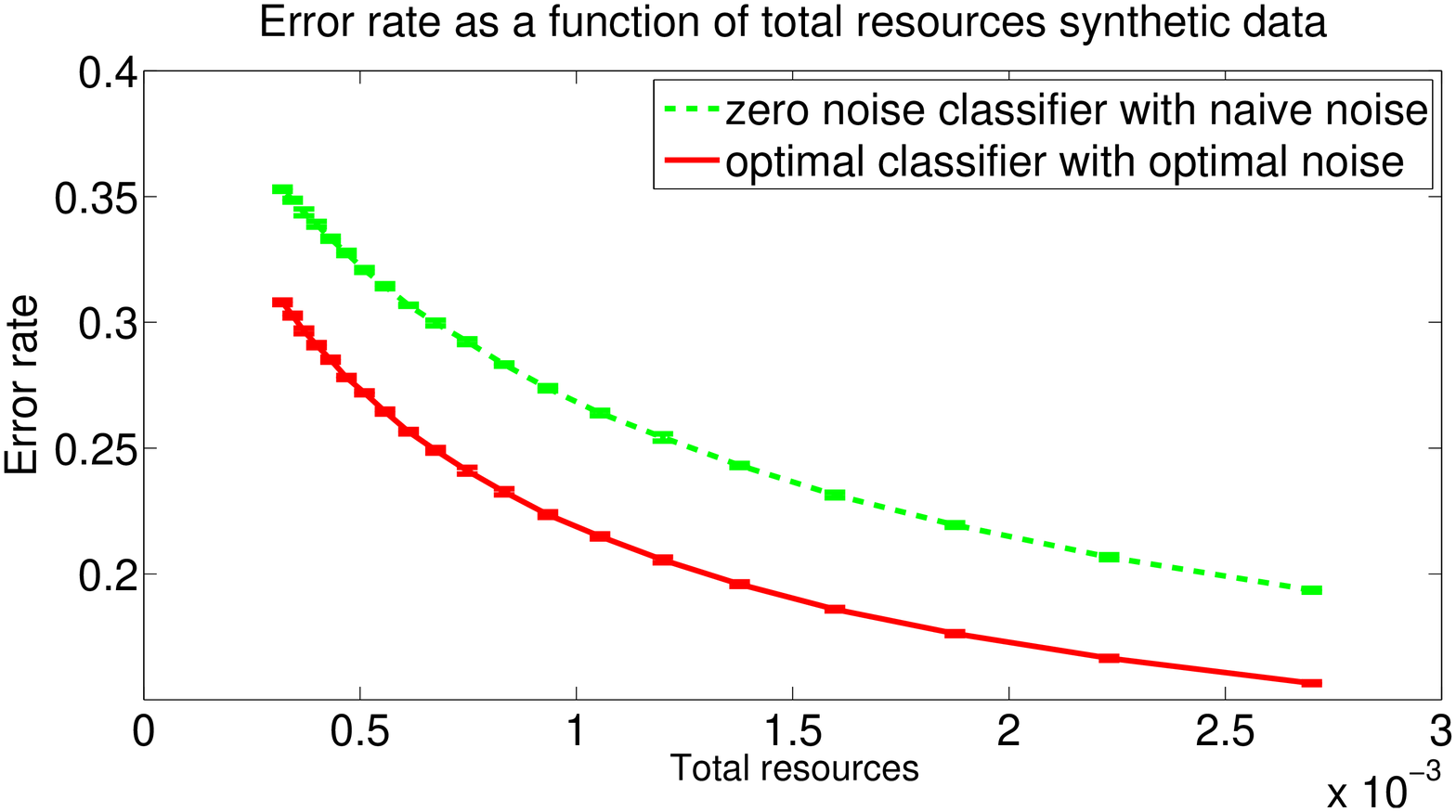}
      \caption{Error rate for synthetic data. }
      \label{error_for_simulated}
   \end{figure}
 \begin{figure}
 
   \centering
 
  \captionsetup{justification=centering,margin=0.5cm}
       \includegraphics[angle=0,
       width=0.5\linewidth,height=2.5in]{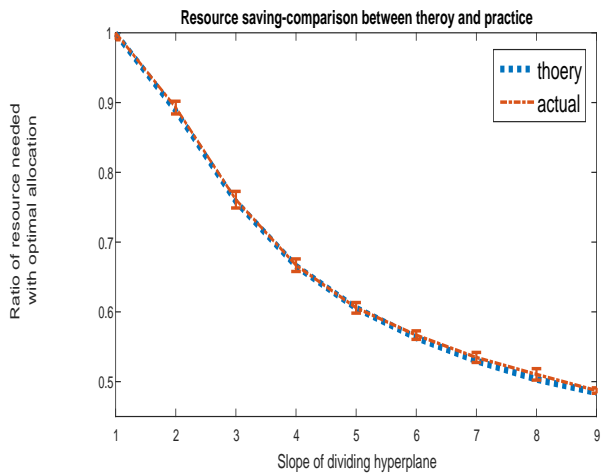}
       \caption{Ratio of resources needed for the same error level- synthetic data. Lower is better}
       \label{ratio_of_improvment}
   
   \end{figure}

%

{\bf Real data sets.} Next, we tested the method on real-life databases from the UCI repository. We started with the skin segmentation data set \cite{skin} where RGB pixels are classified as skin or non-skin. Noise was added artificially to each pixel From the 245057 available samples, a random subset of 10000 was used for learning while the rest was used to estimate performance. Ten-fold cross validation was performed. The results for different $R$ values can be seen in Figure \ref{error_for_skin}. It can be seen that the method results in about 30 \% reduction in resources. 

\begin{figure}
   \centering
   \centering
   \captionsetup{justification=centering,margin=0.5cm}
   \includegraphics[angle=0,
      width=0.5\linewidth,height=2.5in]{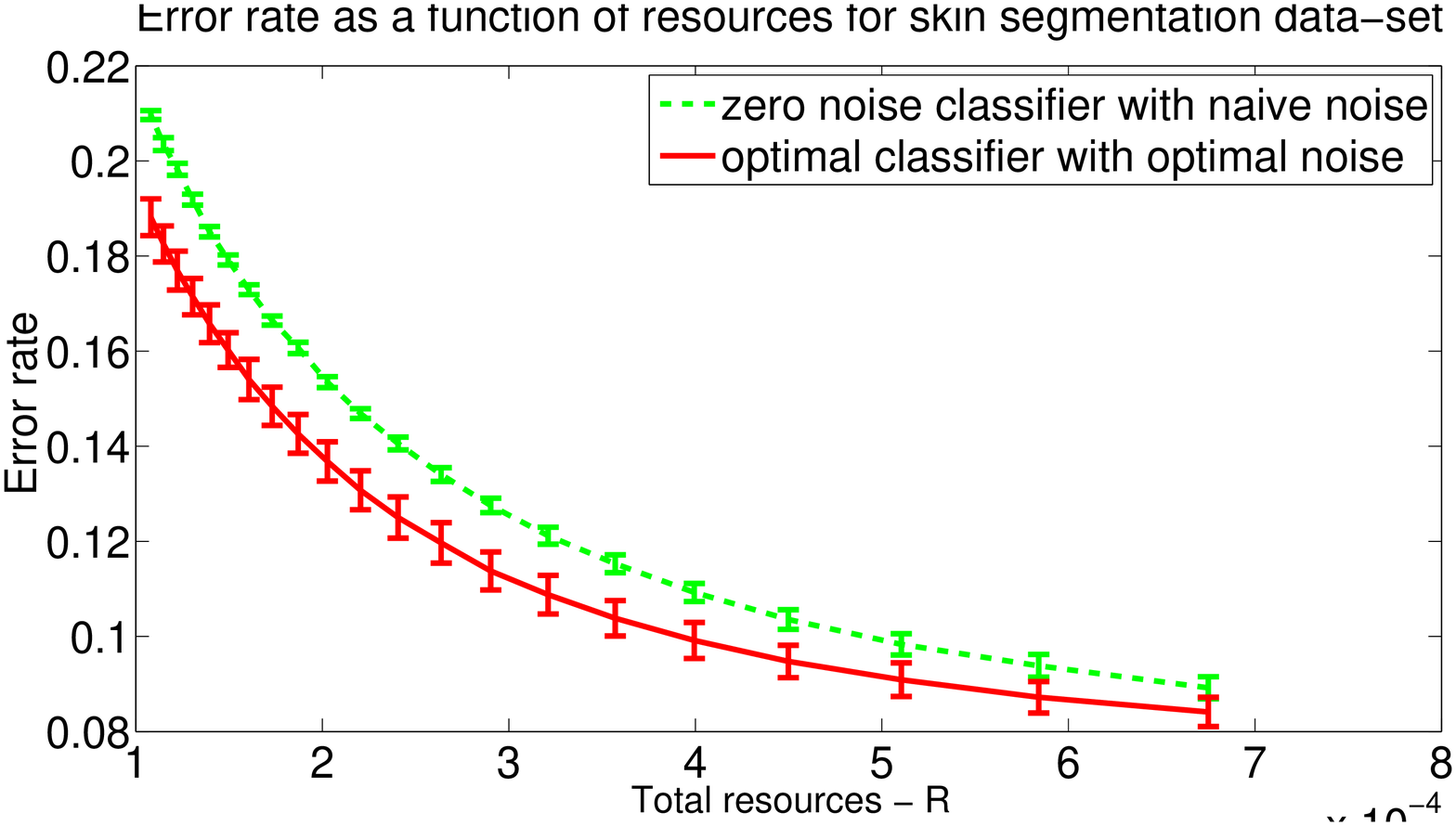}
      \caption{Error rate for skin segmentation data-set }
      \label{error_for_skin}
      \end{figure}
     \begin{figure}
    \centering
   \captionsetup{justification=centering,margin=0.5cm}
       \includegraphics[angle=0,
       width=0.5\linewidth,height=2.5in]{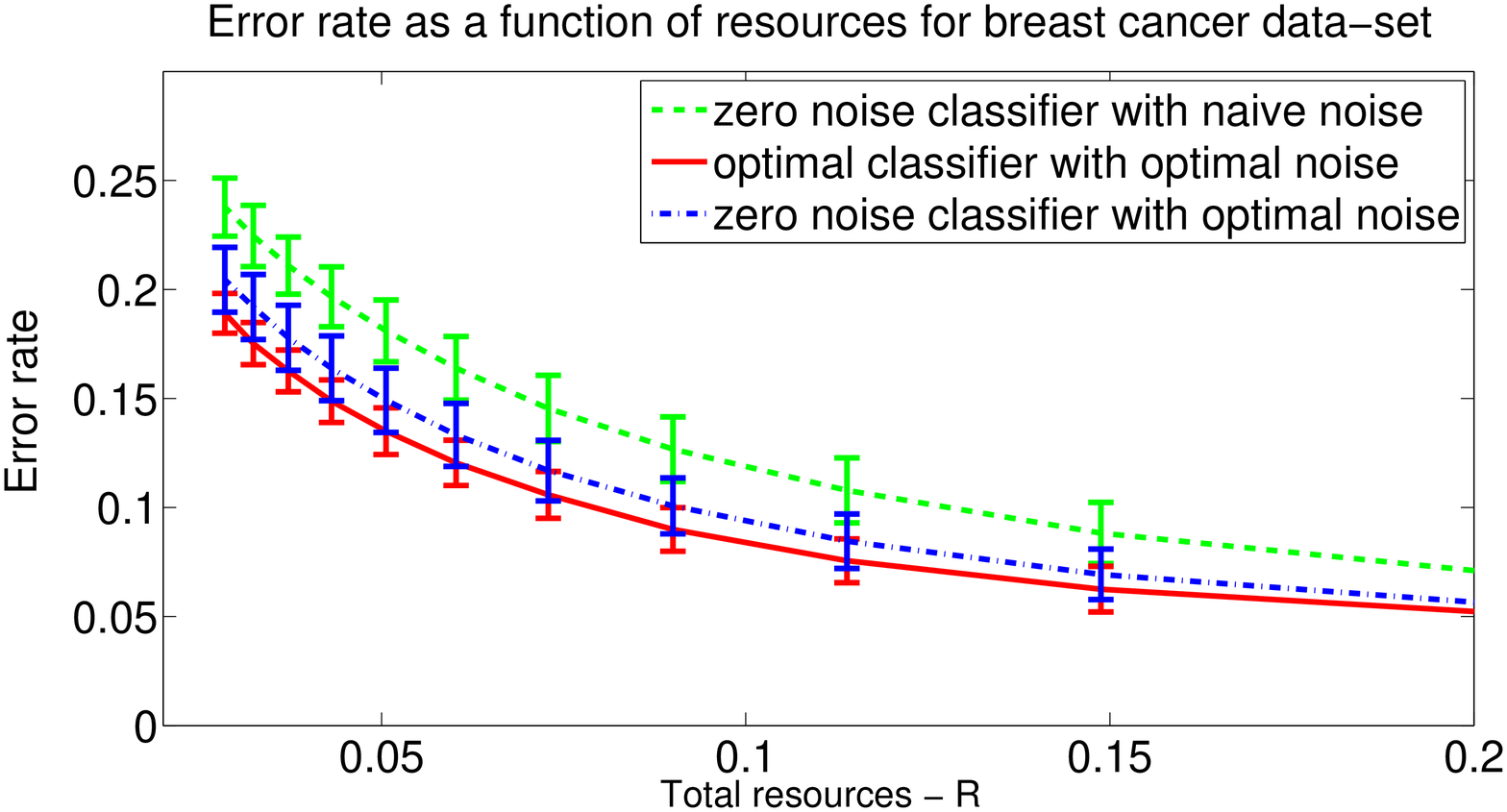}
       \caption{Error rate for breast cancer data-set}
       \label{error_for_breast}
   
   \end{figure}


We tested the method on the breast cancer data set from the UCI repository\cite{mangasarian1995breast}.  This data-set contains 9 features that represent measurements from a biopsy and classified each sample as malignant or benign.
The 683 samples were randomly divided,  2/3 of the data was used for training and the remaining 1/3 for testing. The results are depicted in Figure \ref{error_for_breast}. The optimal classifier is different than the zero-noise classifier. In order to demonstrate what portion of the benefit arise from the resource allocation and what portion from the difference between the classifiers we added a plot of the error-rate of the zero-noise classifier when resources are allocated optimally. Most of the benefit comes from the correct allocation of resources. 



\section{Conclusion} \label{section:conclusion}

We presented a method for optimal resource allocation in classification problems along with an analysis of the expected benefits from using this method. Our framework is general and we specialized it for the important special case of linear classifiers with Gaussian noise or with certain adversarial disturbances.

The framework we presented opens up several directions for future research. First, a natural extension of our work is to consider non-linear classifiers. This can be easily done using the ``kernel-trick" computationally. However, while the disturbance (stochastic or adversarial) has a comfortable shape in the input space, this does not necessarily happen in the feature space. This can probably be accommodated using the same techniques as \cite{xu2009robustness} to obtain performance bounds.

Second, an expansion of the framework presented is the case where resources can be further divided between samples such that ``hard'' to classify examples will receive more resources. The key observation for this is the fact that allocation of resources between features is local in nature. The global cost function $L(h,r)$ can be replaced by $l(x,y,h,r)$ and therefore allows deciding on the allocation of resources for each sample separately. The optimal allocation creates a function $l(x,y,h,R)$ that can be used in the method presented in \cite{richman15} to produce optimal allocation between samples.

Finally, the simulation results in this paper include only noise that was artificially generated. This is due to the complexity of creating a closed-loop system that controls the acquisition process. We believe that closing a complete feedback loop in applications such as sensor networks and radar will provide similar benefit to that presented as long as the noise is appropriately modelled.

\section{Appendix}

\paragraph{Proof of Theorem \ref{optimal-noise} }
		\begin{proof}
		We start by proving the following lemma:
		
			\begin{lemma}\label{convex-lemma}
					Let $L(w,b,\sigma)$ be an acceptable loss function.
							If $L(w,b,r)$ is twice differentiable in $r$ then it is convex in $r$.
			\end{lemma}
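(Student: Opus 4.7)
The plan is to use the classical composition rule from convex analysis. Since by assumption $L(w,b,r)=\tilde L(w,b,\sigma(w,b,r))$, and since $\tilde L(w,b,\cdot)$ is convex and (strictly) increasing while $\sigma(w,b,\cdot)$ is convex in $r$, the composition of a nondecreasing convex function with a convex function is convex. This already gives the conclusion without differentiability, but under the twice-differentiability hypothesis I would make the argument explicit and self-contained by writing out the Hessian of $L$ in $r$.

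Concretely, with $w,b$ held fixed and $\sigma = \sigma(w,b,r)$, I would apply the chain rule twice to obtain
\begin{equation*}
\nabla^2_r L \;=\; \tilde L''(\sigma)\,\nabla_r\sigma\,(\nabla_r\sigma)^{\top} \;+\; \tilde L'(\sigma)\,\nabla^2_r\sigma,
\end{equation*}
and then verify that each summand is positive semidefinite. The first term is the product of the scalar $\tilde L''(\sigma)\ge 0$ (from convexity of $\tilde L$ in $\sigma$) with the rank-one PSD matrix $\nabla_r\sigma(\nabla_r\sigma)^{\top}$, so it is PSD. The second term is the product of $\tilde L'(\sigma)>0$ (from strict monotonicity of $\tilde L$ in $\sigma$) with $\nabla^2_r\sigma$, which is PSD because $\sigma$ is convex in $r$. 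Summing two PSD matrices yields a PSD matrix, so $\nabla^2_r L\succeq 0$ and hence $L$ is convex in $r$.

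I expect no significant obstacle here; the only thing requiring care is the sign bookkeeping. The strict monotonicity assumption on $\tilde L$ (rather than merely monotonicity or none at all) is precisely what guarantees that the coefficient $\tilde L'(\sigma)$ multiplying $\nabla^2_r\sigma$ is nonnegative, so that the convexity of $\sigma$ transfers through the composition instead of being inverted. The positivity assumption on $\sigma$ and the restriction $r_j>0$ are not used in the convexity argument itself, but ensure that the chain-rule expression is well defined on the feasible region of problem (\ref{problem:stat}).
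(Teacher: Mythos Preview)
Your argument is correct and matches the paper's proof almost verbatim: the paper also computes the Hessian via the chain rule, identifies the first summand as a PSD rank-one matrix scaled by $\partial^2 \tilde L/\partial\sigma^2\ge 0$ and the second as $\nabla^2_r\sigma\succeq 0$ scaled by $\partial \tilde L/\partial\sigma>0$, and concludes. Your additional remark that the composition rule for convex functions already yields the result without differentiability is a valid observation that the paper does not make.
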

				
		\begin{proof}
			Since $L$ is twice differentiable in $r$ we can calculate the Hessian
			\begin{equation*}
			\frac{\partial G}{\partial r_i \partial r_j}= \frac{\partial^2L}{\partial^2\sigma}\frac{\partial \sigma}{\partial r_i}\frac{\partial \sigma}{\partial r_j}+\frac{\partial L}{\partial \sigma}\frac{\partial^2 \sigma}{\partial r_i\partial r_j}.
			\end{equation*}
			
			The first term is a positive semi-definite matrix that is multiplied by a positive factor (since $L$ is convex). The second term is the Hessian of $\sigma$ which is  positive semi-definite (since $\sigma(r)$ is convex) multiplied by a positive factor (since $L$ is increasing). Therefore, the Hessian is positive semi-definite and $L$ is convex in $r$. 
			
			\end{proof}

				We now continue to prove the theorem by noting that problem (\ref{problem:stat}) can be rewritten as
				\begin{equation}
							\begin{array}{l}
							\min_{w,b} (\min_r (L(w,b,r)))\\
							s.t \\
							\sum\limits_{i=1}^{d} r_i = R
							\\ r_i>0
							\end{array}
					\end{equation}
					The inner optimization is convex, therefore necessary and sufficient conditions are given by Karush-Khun-Tucker
				\begin{equation*}
								\begin{array}{l}
								\frac{\partial L}{\partial r_i}=\frac{\partial L}{\partial \sigma} \frac{\partial \sigma}{\partial r_i}=-\lambda+\mu_i \\
								\mu_i r_i=0\\
								\sum\limits_{i=1}^{d} r_i = R\\
								 \mu_i \geq 0.
								\end{array}
				\end{equation*}	
				
				 Since $\frac{\partial L}{\partial \sigma} (r)$ is positive and the same for each $r_i$ we can denote $\tilde{\lambda}=\lambda \big(\frac{\partial L}{\partial \sigma} \big)^{-1}$ and obtain the result. 
				\end{proof}

\paragraph{Proof of Theorem \ref{th:R_improve}}
\begin{proof} 
From the definition of $R_{unif}(w,l)$ and  $R_{opt}(w,l)$ it holds that $L(w,b,r_{unif})=L(w,b,r_{opt}(w))$.
Now,
\begin{equation*}
\frac{d|w|_2^2}{R_{unif}}+\mathbb{E}((wx-y)^2)=\frac{|w|_1^2}{R_{opt}}+\mathbb{E}((wx-y)^2)
\end{equation*}

Since the second term is the same in both sides of the equality it is easily derived that
\begin{equation*}
 \frac{R_{unif}(w,l)}{R_{opt}(w,l)}=\frac{d|w|_2^2}{|w|_1^2}.
\end{equation*}

\end{proof}

\paragraph{Proof of Theorem \ref{theorem_adversrial} }
\begin{proof}
Using Lemma \ref{xu_lemma}, problem (\ref{problem1})  turns into:

\begin{equation} \label{problem1_2}
\begin{array}{l l}
\min\limits_{\mathcal{N}_0}\min\limits_{w,b,\xi}
\sup\limits_{\delta \in \mathcal{N}_0 } (w^T\delta)+\sum\limits_{i=1}^{M} \xi_i &\\
s.t:& \\ 
\xi_i \geq 1-y_i(w^Tx_i+b) & i=1\ldots,M \\
\xi_i \geq 0 & i=1\ldots,M \\
\end{array}
\end{equation}
substituting the order of the $\min$ prove the theorem. 
\end{proof}


\paragraph{Proof of Theorem \ref{th:unknown}}

\begin{proof}
We will first cite a slight adaptation of theorem 1 from \cite{zinkevich2003online} (similar adaptation was made in \cite{cesa2011online})
\begin{lemma}
Assume $\max_{t=1,\ldots,T}\mathbb{E}(||\nabla l(w^t,r^t)||^2)\leq ||\nabla l||^2$  then the regret of Algorithm \ref{alg:unknown} satisfies $\mathbb{E}(\sum\limits_{i=1}^{T}(w_tx-y)^2-\min_{|w|_1<B_W} (\sum\limits_{i=1}^{T}(<w_t,x>-y)^2)\leq \frac{B\sqrt{(T)}}{2}+(\sqrt{(T)}-\frac{1}{2})||\nabla l||^2$ where $B=2\sqrt{R^2+B_W^2}$
\end{lemma}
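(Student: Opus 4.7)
The plan is to apply the cited projected SGD regret lemma to the surrogate $\tilde{l}$, after verifying its two hypotheses for Algorithm~\ref{alg:unknown}: unbiasedness of the stochastic direction used at each round, and a uniform second-moment bound matching the $\|\nabla l\|^2$ in the statement.

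First, I would condition on $(w^t, r^t, x^t, y^t)$ and compute the conditional expectation of the stochastic update. For the $w$-coordinate, writing $\hat{x}^{t,1} = x^t + \delta^{t,1}$ with $\delta^{t,1}$ centered, symmetric, independent of $x^t$, and independent across features, the cross terms between $w^t$ and $\delta^{t,1}$ average out and produce the diagonal term $w_i^t \sigma_i^2(r_i^t)$. Hence $\mathbb{E}[(\langle w^t,\hat{x}^{t,1}\rangle-y^t)\hat{x}_i^{t,1}]$ equals, up to the factor of $2$ absorbed into the step-size, $\partial l/\partial w_i$ at $(w^t,r^t,x^t,y^t)$, which in turn matches $\partial\tilde{l}/\partial w_i$ since $\tilde{l}$ differs from $l$ only in its $r$-dependence. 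For the $r$-coordinate, independence of the two acquisitions and of the noise from $(x^t,y^t)$ gives
$$\mathbb{E}\!\left[\frac{(\hat{x}_i^{t,2})^2 - (\hat{x}_i^{t,1})^2}{\epsilon}\right] = \frac{\sigma_i^2(r_i^t + \epsilon) - \sigma_i^2(r_i^t)}{\epsilon},$$
which is exactly $\partial \tilde{\sigma}_i^2/\partial r_i$ by the definition of $\tilde\sigma_i^2$, matching the $r$-part of $\nabla\tilde{l}$. Thus the algorithm is a bona fide projected SGD on the convex objective $\tilde{l}$.

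Second, I would bound the expected squared norm of the stochastic direction by handling the $w$- and $r$-blocks separately. For the $w$-block, using $(a-b)^2 \le 2a^2+2b^2$, Cauchy--Schwarz, $\|w^t\|_2 \le B_W$, and the normalization of $y$ from the setup,
$$\mathbb{E}\,\bigl\|(\langle w^t,\hat{x}^{t,1}\rangle - y^t)\hat{x}^{t,1}\bigr\|_2^2 \;\le\; 2 B_W^2\,\mathbb{E}\|\hat{x}\|_2^4 + 2\,\mathbb{E}\|\hat{x}\|_2^2.$$
Expanding $\hat{x}=x+\delta$, using symmetry, mutual independence, and the given moment bounds yields $\mathbb{E}\|\hat{x}\|_2^2 \le B_{\tilde{x}}^2$ and $\mathbb{E}\|\hat{x}\|_2^4 \le B_{\tilde{x}}^4$ with the formulas stated in the theorem. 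For the $r$-block, $\sum_i (w_i^t)^4 \le \|w^t\|_2^4 \le B_W^4$ together with independence of the two acquisitions gives
$$\sum_i (w_i^t)^4\,\mathbb{E}\!\left[\bigl((\hat{x}_i^{t,2})^2-(\hat{x}_i^{t,1})^2\bigr)^2\right]\Big/\epsilon^2 \;\le\; \frac{2 B_{\tilde{x}}^4 B_W^4}{\epsilon^2},$$
after bounding $\mathbb{E}[(\hat{x}_i^{t,j})^4]$ coordinate-wise by $\mathbb{E}\|\hat{x}\|_2^4$. Summing the two blocks reproduces the $\|\nabla l\|^2$ expression in the statement.

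Third, the left-hand quantity $\mathbb{E}[(\langle w^t, \hat{x}^{t,1}\rangle - y^t)^2]$ equals, in conditional expectation over the noise, $l(w^t,r^t,x^t,y^t)$, which agrees with $\tilde{l}(w^t,r^t,x^t,y^t)$ up to the Kiefer--Wolfowitz smoothing bias that is by construction absorbed into the definition of $\tilde{l}$ against which the comparator minimum is taken. Applying the cited projected SGD regret lemma, with diameter $B=2\sqrt{R^2+B_W^2}$ and gradient second-moment bound $\|\nabla l\|^2$ just derived, then yields the claimed inequality. The main technical obstacle is the second-moment control in the $r$-block: the $1/\epsilon^2$ factor in $2 B_{\tilde{x}}^4 B_W^4/\epsilon^2$ is the dominant term and only remains finite because independence of the two acquisitions lets us bound the fourth moment of the finite difference by a constant; without such an independence- or correlation-driven cancellation, the naive bound would inflate further, which is precisely what the two subsequent remarks sharpen by exploiting shared randomness.
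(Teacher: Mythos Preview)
Your argument is essentially sound, but you are proving more than the lemma asks. The lemma is a \emph{conditional} statement: it \emph{assumes} the uniform second-moment bound $\max_t \mathbb{E}\|\nabla l(w^t,r^t)\|^2 \le \|\nabla l\|^2$ and concludes the regret inequality. In the paper this lemma is not proved at all; it is simply quoted as a slight adaptation of Zinkevich's online-gradient-descent theorem (with the Cesa--Bianchi et~al.\ martingale variant to accommodate stochastic directions). Your Step~1 (unbiasedness of the update with respect to $\nabla\tilde l$) and Step~3 (identifying the left-hand side and invoking the projected-SGD bound with diameter $B$) are exactly the ingredients needed to justify that Algorithm~\ref{alg:unknown} falls under that cited framework, and they match the paper's intent.

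Your Step~2, however --- the explicit derivation of $\|\nabla l\|^2 = 2B_W^2 B_{\tilde x}^4 + 2B_{\tilde x}^2 + 2B_{\tilde x}^4 B_W^4/\epsilon^2$ --- is \emph{not} part of the lemma's content. It is the separate calculation that the paper carries out \emph{after} stating the lemma, in order to deduce Theorem~\ref{th:unknown} from it. So what you have written is effectively a proof of Theorem~\ref{th:unknown} rather than of the lemma alone. Nothing is wrong with your computation (it mirrors the paper's line-by-line), but for the lemma itself you should simply take the gradient bound as a hypothesis and appeal to the cited online-SGD result; the second-moment analysis belongs one level up.
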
  
\end{proof}

Now it is only left to prove that $\max_{t=1,\ldots,T}\mathbb{E}(||\nabla l(w^t,r^t)||^2)\leq 2B_w^2B_{\tilde{x}}^4+2B_{\tilde{x}}^2+\frac{2B_{\tilde{x}}^4B_W^4}{\epsilon^2}$

\begin{IEEEeqnarray*}{rCl}
\mathbb{E}(||\nabla l(w^t,r^t)||^2)&=&\mathbb{E}\big[||(<w,\tilde{x}>-y)\tilde{x}||^2+\sum\limits_{i=1}^{d}w_i^4\frac{{({(\hat{x}_i^{t,2})}^2-{(\hat{x}_i^{t,1})}^2)}^2}{\epsilon^2}]\\
&\leq& 2||w||^2 \mathbb{E}(||\tilde{x}||^4)+2\mathbb{E}(y^2||\tilde{x}||^2)+\frac{||w||^4}{\epsilon^2}\max_{i=1,\ldots,d}\mathbb{E}[(\hat{x}_i^{t,2})^2-{(\hat{x}_i^{t,1})}^2\big]\\
&\leq&2B_w^2B_{\tilde{x}}^4+2B_{\tilde{x}}^2+\frac{2B_{\tilde{x}}^4B_W^4}{\epsilon^2}.
\end{IEEEeqnarray*}

The first inequality results from the fact that $||a+b||^2\leq 2||a|^2+2||b|^2$. The second inequality stands since
 \begin{IEEEeqnarray*}{rCl}
 \mathbb{E}(\tilde{x}^4) &\leq& B_x^4+6B_x^2B_\delta^2+B_\delta^4\\
 \mathbb{E}(\tilde{x}^2) &\leq& B_x^2+B_\delta^2\\
 y^2&=&1.
 \end{IEEEeqnarray*}

\orans

\paragraph{Proof of Theorem \ref{th:uniform}}

\begin{proof}
Denote the noisy measurement $\tilde{x}$ as $x+N$ where $N$ is the noise vector. 
The following relations are obtained by assigning $r_i=\frac{R}{d}$ and $\mathbb{E}||N_i||_2^2=\frac{1}{r_i}=\frac{d}{R}$ :
\begin{equation*}
||\Sigma_tw_t||^2=\sum\limits_{i=1}^{d}\frac{w^2_i}{r_i^2}\leq\frac{d^2}{R^2}B_W^2
\end{equation*}
\begin{equation*}
\mathbb{E}(||<W_t,N>||_2^2)=\sum\limits_{i=1}^{d}\frac{w^2_i}{r_i}\leq\frac{d}{R}B_W^2
\end{equation*}
\begin{equation*}
\mathbb{E}(||N||_2^2)=\sum\limits_{i=1}^{d}\frac{1}{r_i}\leq\frac{d^2}{R}.
\end{equation*}

Also,

\begin{equation*}
\mathbb{E}(||<w,N>N||^2)=(\sum\limits_{i=1}^{d}w_iN_i)^2||N||_2^2=\sum\limits_{i,j,k}^{}w_iw_jN_iN_jN_k^2.
\end{equation*}

Since $N_i$ is a zero mean gaussian random variable where for $i \neq j$ $N_i$ and $N_j$ are independent all expectation of odd power in $N_i$ is $0$. in addition $\mathbb{E}(N_i^4)=3\mathbb{E}^2(N_i^2)$. Now.
\begin{IEEEeqnarray*}{rcl}
\mathbb{E}(||<w,N>N||^2)&=&\sum\limits_{i=1}^{d}w_i^2E(N_i^2\sum\limits_{k=1}^{d}N_k^2)\\
&=&\sum\limits_{i=1}^{d}w_i^2E(N_i^4)+\sum\limits_{i,j,i\neq j}^{d}w_i^2E(N_i^2N_j^2))\\
&\leq&3\frac{d^2}{R^2}B_W^2+\frac{d^3}{R^2}B_W^2.
\end{IEEEeqnarray*}  

Now, using the fact that $||a+b||^2<2||a||^2+2||b||^2$ at each stage,

\begin{IEEEeqnarray*}{rCl}
\mathbb{E}(||\nabla_t||_2^2)&=&E_t||2(<w_t,\tilde{x}_t>-y_t)\tilde{x}_t-\Sigma_tw_t||_2^2\\
&\leq&8\mathbb{E}(||(<w_t,\tilde{x}_t>-y_t)\tilde{x}_t||^2)+2||\Sigma_tw_t||^2\\
&\leq&16\mathbb{E}(||(<w_t,x_t>+<w_t,N>))(x_t+N)||^2)+16E(||y_t(x_t+N)||^2)+2||\Sigma_tw_t||^2\\
&\leq&32\mathbb{E}(||(<w_t,x_t>))x_t||^2)+32\mathbb{E}(||(<w_t,N>))N||^2)\\
&&+32\mathbb{E}(||(<w_t,x_t>))N||^2)+32\mathbb{E}(||(<w_t,N>))x_t||^2)\\
&&+16\mathbb{E}(||y_tx_t||^2)+16E(||y_tN||^2)+2||\Sigma_tw_t||^2\\
&\leq&32B_w^2\frac{d^3}{R^2}+98B_w^2\frac{d^2}{R^2}+32B_W^2\frac{d^2}{R}+32B_W^2\frac{d}{R}+16\frac{d^2}{R}+32B_W^2B_x^4+16=G 
\end{IEEEeqnarray*}

where the last inequality is due to the relations above.

\end{proof}

\paragraph{Proof of Theorem \ref{th:efficent}}

\begin{proof}
Now, $r_i(w)=\frac{R}{2d}+\frac{Rw_i}{2|w|_1}$. Therefore $r_i\geq \frac{R}{2d}$ and  $r_i\geq \frac{Rw_i}{2|w|_1}$.
This results in  $\mathbb{E}||N_i||_2^2\leq \frac{2d}{R}$ and  $\mathbb{E}||N_i||_2^2\leq \frac{2|w|_1}{Rw_i}$ 

Now,
\begin{equation*}
\begin{array}{l}
||\Sigma_tw_t||^2=\sum\limits_{i=1}^{d}\frac{w^2_i}{r_i^2}\leq\frac{4d}{R^2}B_W^2\\
\mathbb{E}(||<W_t,N>||_2^2)=\sum\limits_{i=1}^{d}\frac{w^2_i}{r_i}\leq\frac{2}{R}B_W^2\\
\mathbb{E}(||N||_2^2)=\sum\limits_{i=1}^{d}\frac{1}{r_i}\leq\frac{2d^2}{R}
\end{array}
\end{equation*}

In a similar fashion to the derivation in the proof of Theorem \ref{th:uniform} it results that
\begin{equation*}
E(||<w,N>N||^2)=\sum\limits_{i=1}^{d}w_i^2E(N_i^4)+\sum\limits_{i,j,i\neq j}^{d}w_i^2E(N_i^2N_j^2))\leq 12\frac{d}{R^2}B_W^2+4\frac{d^2}{R^2}B_W^2.
\end{equation*}


The rest of the proof is similar to that of Theorem \ref{th:uniform} using the above relations instead of the corresponding relations in Theorem \ref{th:uniform}. 

\end{proof}

\orane
\footnotesize{
\onehalfspacing
\bibliography{mybib2}{}
\bibliographystyle{abbrv}
}

\clearpage

\end{document}